\font\stixfrak=stix-mathfrak at 10pt
\newcommand\Func[2]{%
    \textbf{function} #1
    \algocf@group{#2}%
}
\newcommand\Forr[2]{%
    \textbf{for} #1 \textbf{do}%
    \algocf@group{#2}%
}
\newcommand\Blnk[2]{%
    \hspace{10pt}#1
    \algocf@group{#2}%
    \textbf{end}
}
\newcommand{\removelatexerror}{\let\@latex@error\@gobble}
\def\@endtheorem{\endtrivlist}
\newtheorem{theorem}{Theorem}
\newtheorem{definition}{Definition}
\newtheorem{proposition}{Proposition}
\newtheorem{lemma}{Lemma}
\date{\today}
\newcommand{\nn}{{\mathscr{N}\negthickspace\negthickspace\negthinspace\mathscr{N}}\negthinspace}
\newcommand{\ou}{%
  \mathrel{%
    \vcenter{\offinterlineskip
      \ialign{##\cr$<$\cr\noalign{\kern-1.5pt}$>$\cr}%
    }%
  }%
}%
\newcommand{\bigtimes}{{\sf X}}
\renewcommand{\r}[1]{{\color{BrickRed}#1}}
\newcommand{\g}[1]{{{{\color{Green}#1}}}} 
\newcommand{\e}[1]{{{{\color{Orange}#1}}}} 
\renewcommand{\r}[1]{{\color{Black}#1}}
\renewcommand{\g}[1]{{{{\color{Black}#1}}}} 
\renewcommand{\e}[1]{{{{\color{Black}#1}}}} 
\renewcommand{\marginnote}[1]{}
\newcommand{\lblkbrbrack}{\negthinspace\text{{\stixfrak\char"36}}\normalfont}
\newcommand{\rblkbrbrack}{\text{{\stixfrak\char"37}}\normalfont}
\newcommand{\subarg}[1]{\lblkbrbrack #1 \rblkbrbrack}
\newcommand{\myalg}{L-TLLBox}
\newcommand{\removeonelatexerror}{\let\@latex@error\@gobble}
\begin{document}

\title{
\LARGE{\bf Polynomial-Time Reachability for LTI Systems with Two-Level Lattice Neural Network Controllers
}
} %
\author{James Ferlez\textsuperscript{$*$} and Yasser Shoukry\textsuperscript{$*$}
\thanks{
\textsuperscript{$*$}Department of Electrical Engineering and Computer Science, University of California, Irvine
\texttt{\{jferlez,yshoukry\}@uci.edu}
} %
\thanks{This work was supported in part by NSF awards \#CNS-2002405 and \#CNS-2013824 and the C3.ai Digital Transformation Institute.} %
}

\maketitle

\begin{abstract}
	In this paper, we consider the computational complexity of bounding the 
	reachable set of a Linear Time-Invariant (LTI) system controlled by a 
	Rectified Linear Unit (ReLU) Two-Level Lattice (TLL) Neural Network (NN) 
	controller. In particular, we show that for such a system and controller, 
	it is possible to compute the exact one-step reachable set in 
	\emph{polynomial} time in the size of the TLL NN controller 
	(number of neurons). Additionally, we show that a tight bounding box of the 
	reachable set is computable via two polynomial-time methods: one with 
	polynomial complexity in the size of the TLL and the other with polynomial 
	complexity in the Lipschitz constant of the controller and other problem 
	parameters. 
	Finally, we propose a pragmatic algorithm that adaptively combines the 
	benefits of (semi-)exact reachability and approximate reachability, which 
	we call \myalg. We evaluate \myalg~with an empirical comparison to a 
	state-of-the-art NN controller reachability tool. In our experiments, 
	\myalg~completed reachability analysis as much as 5000x faster than this 
	tool on the same network/system, while producing reach boxes that were from 
	0.08 to 1.42 times the area.
\end{abstract}


\section{Introduction} 
\label{sec:introduction}
Neural Networks (NNs) are increasingly used to control dynamical systems in 
safety critical contexts. 
As a result, the problem of \emph{formally} verifying the safety properties of 
NN controllers in closed loop is a crucial one.
Despite this, comparatively little attention has been paid to the 
\emph{time-complexity} of such reachability analysis. Understanding -- and 
improving -- the complexity of NN verification algorithms is thus crucial to 
designing provably safe NN controllers: it bears directly on the size of NNs 
that can be pragmatically verified.


Formal verification of NNs is usually formulated in terms of static 
input-output behavior, but there are few results analyzing the time complexity 
of such input-output verification \cite{KatzReluplexEfficientSMT2017a, 
FerlezBoundingComplexityFormally2020, TranNNVNeuralNetwork2020}. We know of no 
paper that directly analyzes the time-complexity of exact reachability analysis 
for LTI systems with NN controllers, although 
\cite{TranStarBasedReachabilityAnalysis2019} comes closest.  \marginnote{\g{(E0)}} %
\g{Note: exact reachability is distinct from (polynomial-time) set-based 
reachability methods, which consider an over-approximated set of possible 
controller outputs in each state \cite{AlthoffSetPropagationTechniques2021}.} 
Formally, \cite{TranStarBasedReachabilityAnalysis2019} only provides a 
complexity result for verifying the input-output behavior of a NN, but the 
underlying methodology, star sets, suggests a complexity analysis for exact 
reachability of LTI systems. Unfortunately, that algorithm produces 
exponentially many star sets -- in the number of neurons -- just to verify the 
input-output behavior of a NN once \cite[Theorem  
1]{TranStarBasedReachabilityAnalysis2019}; this exponential complexity 
compounds with each additional time step in reachability analysis. No such 
analysis is provided for the accompanying approximate star-set reachability 
analysis.


In this paper, we show that for a certain class of ReLU NN controllers -- 
viz. Two-Level Lattice (TLL) NNs \cite{FerlezAReNAssuredReLU2020} -- exact (or  
quantifiably approximate) reachability analysis for a controlled discrete-time 
LTI system is worst-case polynomial time complexity in the size (number of 
neurons) of the TLL NN controller. \g{Thus, we show that LTI reachability analysis\marginnote{\g{(E1)}} %
for the TLL NN architecture is dramatically more efficient (per neuron) than 
the same problem with general NNs  (i.e. exponential complexity 
\cite{TranStarBasedReachabilityAnalysis2019}; see above). In this sense, our 
results motivate for \emph{directly designing TLL NN controllers in the first 
place}, since reachability for a TLL NN controller is more efficient to compute 
(TLL NNs are similarly beneficial in other problems: e.g. verification 
\cite{FerlezFastBATLLNN2022}). Moreover, TLL NNs can realize \emph{the same 
functions} that general ReLU NNs can\footnote{See the TLL form of Continuous 
Piecewise-Affine functions 
\cite{TarelaRegionConfigurationsRealizability1999}.}, so no generality in 
realizable controllers is lost by this choice.}

In particular, we prove several polynomial-complexity results related to the 
\emph{one-step reachable set} of a discrete-time LTI system: i.e., the set 
$X_{t+1} = \{ A x + B \nn(x) | x \in X_{t} \}$ for a given polytopic set of 
states\footnote{\marginnote{\g{(E3)}}\g{Polytopic input constraints are a 
natural -- and ubiquitous -- choice, since ReLU NNs are affine on convex 
polytopic regions; hence, our complexity results are  also expressed in terms 
of the complexity of a Linear Program.}} $X_t$ and a TLL controller $\nn$. 
Moreover, we consider the computation of both the \emph{exact} set $X_{t+1}$ 
and an $\epsilon$-tight \emph{bounding box} of $X_{t+1}$. All claimed 
complexities are worst case and with respect to \emph{a fixed state-space 
dimension}\footnote{The reachability (verification) problem for a NN alone is 
known to be able to encode satisfiability of any 3-SAT formula; in particular, 
this result matches 3-SAT variables to input dimensions to the network 
\cite{KatzReluplexEfficientSMT2017a}.}, $n$. \marginnote{\e{(E2)}}\e{These 
results are summarized as:

\begin{enumerate}[left=0.1\parindent,label={\itshape (\roman*)}]
	\item The exact one-step reachable set, $X_{t+1}$, can be computed in 
		polynomial time-complexity in the size of the TLL NN 
		(\cref{thm:one_step_exact}).

	\item An $\epsilon$-tight bounding box for $X_{t+1}$ can be computed via 
		three algorithms with time-complexities:

		\begin{enumerate}[left=0.2\parindent,label={\itshape (\alph*)}]
			\item polynomial in size of the TLL 
				(\cref{thm:tll_size_one_step_box}); or

			\item polynomial in the Lipschitz constant of the controller, 
				the accuracy, $\epsilon$, the norm of the $B$ matrix and the 
				volume of $X_t$ (\cref{prop:generic_lipschitz_reachability}); or 

			\item the minimum complexity of \emph{(ii-a)} and \emph{(ii-b)} 
				(\cref{thm:box_alg_choice}); this uses polynomial-time 
				Lipschitz constant computation for TLLs 
				(\cref{lem:lipschitz_bound}).
		\end{enumerate}
\end{enumerate}
Here an $\epsilon$-tight bounding box of $X_{t+1}$ is one that is within 
$\epsilon > 0$ of the exact, coordinate-aligned bounding box of $X_{t+1}$.}



In addition, we propose an algorithm that \emph{adaptively} combines notions of 
exact and approximate \emph{bounding box} reachability for TLL NNs in order to 
obtain an extremely effective approximate reachability algorithm, which we call 
\myalg. 
We validate this method by empirically comparing an implementation of 
\myalg\footnote{\url{https://github.com/jferlez/FastBATLLNN}} with the 
state-of-art NN reachability tool, NNV \cite{TranNNVNeuralNetwork2020}. On a 
test suite of TLL NNs derived from the TLL Verification Benchmark in the 2022 
VNN Competition \cite{vnn2022}, \myalg~performed LTI reachability analysis  as 
much as 5000x faster than NNV on the same reachability problem; \myalg~produced 
reach boxes of 0.08 to 1.42 times the area produced by NNV.

\noindent \textbf{Related work:} 
There is a large literature on the complexity of set-based reachability for LTI 
systems; \cite{AlthoffSetPropagationTechniques2021} provides a good summary. 
For the complexity of LTI-NN reachability, 
\cite{TranStarBasedReachabilityAnalysis2019} is the closest to providing an  
explicit, exact result. The complexity of approaches based NN  
over-approximation have been considered in 
\cite{HuangPOLARPolynomialArithmetic2022, IvanovVerifyingSafetyAutonomous2020}. 
The literature on the complexity of input-output verification of NNs is larger 
but still small: \cite{TranStarBasedReachabilityAnalysis2019} falls in this 
category as well; \cite{KatzReluplexEfficientSMT2017a} is important for its 
NP-completeness result based on the 3-SAT encoding; and 
\cite{FerlezBoundingComplexityFormally2020, FerlezFastBATLLNN2022} consider the 
complexity of verifying TLL NNs. Other NN-related complexity results include: 
computing the minimum adversarial disturbance is NP hard 
\cite{WengFastComputationCertified2018}, and computing the Lipschitz constant 
is NP hard \cite{VirmauxLipschitzRegularityDeep2018}.

\section{Preliminaries} 
\label{sec:preliminaries}

\subsection{Notation} 
\label{sub:notation}
We will denote the real numbers by $\mathbb{R}$. For an $(n \times m)$ matrix 
(or vector), $A$, we will use the notation $\llbracket A \rrbracket_{[i,j]}$ to 
denote the element in the $i^\text{th}$ row and $j^\text{th}$ column of $A$. 
Analogously, the notation $\llbracket A \rrbracket_{[i,:]}$ will denote the 
$i^\text{th}$ row of $A$, and $\llbracket A \rrbracket_{[:, j]}$ will denote 
the $j^\text{th}$ column of $A$; when $A$ is a vector instead of a matrix, both 
notations will return a scalar corresponding to the corresponding element in 
the vector. We will use angle brackets $\;\subarg{ \cdot }$ to delineate the 
arguments to a function that \emph{returns a function}. We use one special form 
of this notation: for a function $f : \mathbb{R}^n \rightarrow \mathbb{R}^m$ 
and  $i\in \{1, \dots, m\}$ define $\pi_i \subarg{ f } : x \mapsto \llbracket 
f(x) \rrbracket_{[i,:]}$. Finally, $\lVert \cdot \rVert$ will refer to the  
max-norm on $\mathbb{R}^n$, unless otherwise specified.


\subsection{Neural Networks} 
\label{sub:neural_networks}
We consider only  Rectified Linear Unit Neural Networks (ReLU NNs). A $K$-layer 
ReLU NN is specified by $K$ \emph{layer} functions; a layer may be either 
linear or nonlinear. Both types of layer are specified by a parameter list 
$\theta \triangleq (W,b)$ where $W$ is a  $(\overline{d} \times \underline{d})$ 
matrix and $b$ is a $(\overline{d} \times 1)$ vector. Specifically, the 
\emph{linear} and \emph{nonlinear} layers specified by $\theta$ are denoted by 
$L_{\theta}$ and $L_{\theta}^{\scriptscriptstyle \sharp}$, respectively, and 
are defined as:
\begin{align}
	L_{\theta} &: \mathbb{R}^{\scriptscriptstyle \underline{d}} \rightarrow \mathbb{R}^{\scriptscriptstyle \overline{d}}, 
	   &L_{\theta} &:  z \mapsto Wz + b \\
	L_{\theta}^{\scriptscriptstyle \sharp} &: \mathbb{R}^{\scriptscriptstyle \underline{d}} \rightarrow \mathbb{R}^{\scriptscriptstyle \overline{d}},  
	    &L_{\theta}^{\scriptscriptstyle \sharp} &:  z \mapsto \max\{ L_{\theta}(z), 0 \}.
\end{align}
where the $\max$ function is taken element-wise. Thus, a $K$-layer ReLU NN 
function is specified by functionally composing $K$ such layer functions whose 
parameters $\theta^{|i}, i = 1, \dots, K$ have dimensions that satisfy 
$\underline{d}^{|i} = \overline{d}{\vphantom{d}}^{|i-1}: i = 2, \dots, K$; we  
will consistently use the superscript notation ${}^ {\scriptscriptstyle | k}$ 
to identify a parameter with layer $k$. Whether a layer function is linear or 
not will be further specified by a set of linear layers, $\mathtt{lin} 
\subseteq \{1, \dots, K\}$. For example, a typical $K$-layer NN has 
$\mathtt{lin} = \{K\}$, which together with a list of $K$ layer parameters 
defines the NN: $\nn = L_{\theta^{|K}}^{\vphantom{{\scriptscriptstyle \sharp}}} 
\circ L_{\theta^{|K-1}}^{\scriptscriptstyle \sharp} \circ \dots \circ 
L_{\theta^{|1}}^{\scriptscriptstyle \sharp}$.

To indicate the dependence on parameters, we will index a ReLU $\nn$ 
by a \emph{list of NN parameters} $\Theta \triangleq$ $( \mathtt{lin}, 
\theta^{|1},$ $\dots,$ $\theta^{|K} );$
i.e., we will often write $\nn\subarg{\Theta} :  \mathbb{R}^{\scriptscriptstyle 
\underline{d}\vphantom{d}^{|1}} \rightarrow  \mathbb{R}^{\scriptscriptstyle 
\overline{d}\vphantom{d}^{|K}}$.


\subsection{Two-Level-Lattice (TLL) Neural Networks} 
\label{sub:two_layer_lattice_neural_networks}
In this paper, we consider only Two-Level Lattice (TLL) ReLU NNs. Thus, we 
formally define NNs with the TLL architecture using the succinct method 
exhibited in \cite{FerlezBoundingComplexityFormally2020}; the material in this 
subsection is derived from \cite{FerlezAReNAssuredReLU2020, 
FerlezBoundingComplexityFormally2020}.

A TLL NN is most easily defined by way of three generic NN composition 
operators. Hence, the following three definitions lead to the TLL NN in 
{D}efinition \ref{def:scalar_tll}.
\begin{definition}[Sequential (Functional) Composition]
\label{def:functional_composition}
	Let $\nn\subarg{\Theta_{\scriptscriptstyle i}}:  
	\mathbb{R}^{\underline{d}^{\scriptscriptstyle |1}_i} \rightarrow 
	\mathbb{R}^{\overline{d}\vphantom{d}^{|K_i}_i}$, $i = 1, 2$ be two NNs with 
	parameter lists $\Theta_i \triangleq (\mathtt{lin}_i, 
	\theta^{\scriptscriptstyle |1}_i, \dots, \theta^{\scriptscriptstyle 
	|K_i}_i)$, $i=1,2$ such that  $\overline{d}\vphantom{d}^{\scriptscriptstyle 
	|K_1}_1 = \underline{d}^{\scriptscriptstyle |1}_2$. Then the 
	\textbf{sequential (or functional) composition} of 
	$\nn\subarg{\Theta_{\scriptscriptstyle 1}}$ and 
	$\nn\subarg{\Theta_{\scriptscriptstyle 2}}$, i.e.  
	$\nn\subarg{\Theta_{\scriptscriptstyle 1}} \circ  
	\nn\subarg{\Theta_{\scriptscriptstyle 2}}$, is a NN that is represented by 
	the parameter list $\Theta_{1} \circ \Theta_{2} \triangleq  (\mathtt{lin}_1 
	\cup (\mathtt{lin}_2 + K_1),  \theta^{\scriptscriptstyle |1}_1, \dots, 
	\theta^{\scriptscriptstyle |K_1}_{1}, \theta^{\scriptscriptstyle |1}_2,$ 
	$\dots, \theta^{\scriptscriptstyle |K_2}_2)$, where $\mathtt{lin}_2 
	\negthinspace + \negthinspace K_1$ is an element-wise sum.
\end{definition}
\begin{definition}
	\label{def:parallel_composition}
	Let $\nn\subarg{\Theta_{\scriptscriptstyle i}}:  
	\mathbb{R}^{\underline{d}^{\scriptscriptstyle |1}_i} \rightarrow 
	\mathbb{R}^{\overline{d}\vphantom{d}^{|K}_i}$, $i = 1, 2$ be two $K$-layer 
	NNs with parameter lists $\Theta_i = (\mathtt{lin},(W^{\scriptscriptstyle  
	|1}_i\negthinspace , b^{\scriptscriptstyle |1}_i), \dots,$ 
	$(W^{\scriptscriptstyle |K}_i \negthinspace, b^{\scriptscriptstyle 
	|K}_i))$, $i = 1,2$ such that $\underline{d}^{\scriptscriptstyle |1}_1 = 
	\underline{d}^{\scriptscriptstyle |1}_2$; also note the common set of  
	linear layers, $\mathtt{lin}$. Then the \textbf{parallel composition} of 
	$\nn\subarg{\Theta_{\scriptscriptstyle 1}}$ and  
	$\nn\subarg{\Theta_{\scriptscriptstyle 2}}$ is a NN given by: 
	\begin{equation}
		\Theta_{1} \negthickspace \parallel \negthickspace \Theta_{2} \negthinspace \triangleq \negthinspace \big(
			\mathtt{lin},
			\Big(
				\negthinspace
				\Big[
					\begin{smallmatrix}
						W^{\scriptscriptstyle |1}_1\\
						W^{\scriptscriptstyle |1}_2
					\end{smallmatrix}
				\Big],
				\Big[
					\begin{smallmatrix}
						b^{\scriptscriptstyle |1}_1 \\
						b^{\scriptscriptstyle |1}_2
					\end{smallmatrix}
				\Big]
				\negthinspace
			\Big),
			{\dots} ~, 
			\Big(
				\negthinspace
				\Big[
					\begin{smallmatrix}
						W^{\scriptscriptstyle |K}_1 & \mathbf{0} \\
						\mathbf{0} & W^{\scriptscriptstyle |K}_2
					\end{smallmatrix}
				\Big],
				\Big[
					\begin{smallmatrix}
						b^{\scriptscriptstyle |K}_1 \\
						b^{\scriptscriptstyle |K}_2
					\end{smallmatrix}
				\Big]
				\negthinspace
			\Big)
		\negthinspace\big)
	\end{equation}
	where $\mathbf{0}$ is a sub-matrix of zeros of the appropriate size. That 
	is $\Theta_{1} \negthickspace \parallel \negthickspace \Theta_{2}$ accepts 
	an input of the same size as (both) $\Theta_1$ and $\Theta_2$, but has as 
	many outputs as $\Theta_1$ and $\Theta_2$ combined.
\end{definition}

\begin{definition}[$n$-element $\min$/$\max$ NNs]
	\label{def:n-element_minmax_NN}
	An $n$\textbf{-element $\min$ network} is denoted by the parameter list 
	$\Theta_{\min_n}$. $\nn\subarg{\Theta_{\min_n}}: \mathbb{R}^n \rightarrow 
	\mathbb{R}$ such that $\nn\subarg{\Theta_{\min_n}}(x)$ is the minimum 
	from among the components of $x$ (i.e. minimum according to the usual order 
	relation $<$ on $\mathbb{R}$). An $n$\textbf{-element $\max$ network} is 
	denoted by $\Theta_{\max_n}$, and functions analogously. These networks are 
	described in \cite{FerlezAReNAssuredReLU2020}.
\end{definition}


The ReLU NNs defined in {D}efinition  
\ref{def:functional_composition}-\ref{def:n-element_minmax_NN} can be arranged 
to define a TLL NN as shown in \cite[Figure 1]{FerlezFastBATLLNN2022}.
\r{We formalize this  construction by first defining a \emph{scalar} TLL NN, 
and then extend this notion to a \emph{multi-output} TLL NN 
\cite{FerlezBoundingComplexityFormally2020}.}\marginnote{\r{(E4)}}
\begin{definition}[Scalar TLL NN {\cite{FerlezBoundingComplexityFormally2020}}]
\label{def:scalar_tll}
A NN from $\mathbb{R}^n \rightarrow \mathbb{R}$ is a \textbf{TLL NN of size} 
$(N,M)$ if its parameter list $\Xi_{\scriptscriptstyle N,M}$ can be 
characterized entirely by integers $N$ and $M$ as follows.
\begin{equation}
	\Xi_{N,M} \negthinspace \triangleq  \negthinspace
		\Theta_{\max_M} \negthinspace\negthinspace
	\circ \negthinspace
		\big(
			(\negthinspace\Theta_{\min_N} \negthinspace \circ \Theta_{S_1}\negthinspace) \negthinspace
			\parallel \negthinspace {\scriptstyle \dots} \negthinspace \parallel \negthinspace
			(\negthinspace\Theta_{\min_N} \negthinspace \circ \Theta_{S_M}\negthinspace)
		\big) \negthinspace
	\circ 
		\Theta_{\ell}
\end{equation}
where

\begin{itemize}
	\item $\Theta_\ell \triangleq (\{1\},\theta_\ell)$ for $\theta_\ell 
		\triangleq (W_\ell,b_\ell)$;

	\item  each $\Theta_{S_j}$ has the form $\Theta_{S_j} = \big( \{1\}, 
		\big( S_j, \mathbf{0} \big)\big)$ where $\mathbf{0}$ is the column  
		vector of $N$ zeros, and where

	\begin{itemize}
		\item[$\vcenter{\hbox{${\scriptscriptstyle \blacksquare}$}}$] $S_j = 
			\left[ \begin{smallmatrix} {\llbracket I_N \rrbracket_{[\iota_1, 
			:]}}\negthickspace\negthickspace\negthickspace^{^{\scriptscriptstyle\text{T}}} 
			& \; \dots \; & {\llbracket I_N \rrbracket_{[\iota_N, 
			:]}}\negthickspace\negthickspace\negthickspace^{^{\scriptscriptstyle\text{T}}} 
			\end{smallmatrix} \right]^\text{T}$ for a length-$N$ sequence 
			$\{\iota_k\}$ where $\iota_k \in \{1, \dots, N\}$ and $I_N$ is the 
			$(N \times N)$ identity matrix. 
		\end{itemize}
\end{itemize}

The affine functions implemented by the mapping  $\ell_i \triangleq  
\pi_i\subarg{L_{\theta_\ell}}$ for $i = 1, \dots, N$ will be referred to as the 
\textbf{local linear functions} of $\Xi_{N,M}$; we assume for simplicity that 
these affine functions are unique. The matrices $\{ S_j | j = 1, \dots, M\}$ 
will be referred to as the \textbf{selector matrices} of $\Xi_{N,M}$. Each set 
$s_j \triangleq \{ k \in \{1, \dots, N\} | \exists \iota \in \{1, \dots, N\}. 
\llbracket S_j \rrbracket_{[\iota,k]} = 1 \}$ is said to be the  
\textbf{selector set of} $S_j$.
\end{definition}

\begin{definition}[Multi-output TLL NN {\cite{FerlezBoundingComplexityFormally2020}}]
\label{def:multi-output_tll}
	A NN that maps $\mathbb{R}^n \rightarrow \mathbb{R}^m$ is said to be a 
	\textbf{multi-output TLL NN of size} $(N,M)$ if its parameter list 
	$\Xi_{\scriptscriptstyle N,M}^{(m)}$ can be written as 
	\begin{equation}
	\label{eq:multi_out_tll}
		\Xi_{\scriptscriptstyle N,M}^{(m)} = \Xi_{\scriptscriptstyle N, M}^1 
			\parallel
			\dots
			\parallel
			\Xi_{\scriptscriptstyle N, M}^m
	\end{equation}\\[-10pt]
	for $m$ equally-sized scalar TLL NNs, $\Xi_{\scriptscriptstyle N, M}^1, 
	\dots, \Xi_{\scriptscriptstyle N, M}^m$, which will be referred to as the 
	\textbf{(output) components of} $\Xi_{\scriptscriptstyle N,M}^{(m)}$.
\end{definition}

\marginnote{\g{(E5)}}\g{Finally, we have the following definition.
\begin{definition}[Non-degenerate TLL]
	\label{def:non_degenerate_tll}
	A scalar TLL NN $\Xi_{\scriptscriptstyle N,M}$ is \textbf{non-degenerate} 
	if each function $\ell_i \triangleq \pi_i\subarg{L_{\Theta_\ell}}$ (see  
	\cref{def:scalar_tll}) is realized on some open set. That is, for each $i = 
	1, \dots, N$ there exists an open set $V_i \subset \mathbb{R}^n$ such that
	\begin{equation}
		\forall x \in V_i . \nn\subarg{\Xi_{\scriptscriptstyle N,M}}(x) = \ell_i(x).
	\end{equation}
\end{definition}}

\section{Problem Formulation} 
\label{sec:problem_formulation}

The main object of our attention is the reachable set of a discrete-time LTI 
system in closed-loop with a state-feedback TLL NN controller. To this end, we  
define the following.

\begin{definition}[One-Step Closed-Loop Reachable Set]
\label{def:closed_loop_reachable_set}
Let $x_{t+1} = A x_t + B u_t$ be a discrete-time LTI system with states $x_t 
\in \mathbb{R}^n$ and controls $u_t \in \mathbb{R}^m$. Furthermore, let $X 
\subset \mathbb{R}^n$ be a compact, convex polytope, and let $\mu :  
\mathbb{R}^n \rightarrow \mathbb{R}^m$ be a state-feedback controller. Then the 
\textbf{one-step reachable set from $X_t$ under feedback control $\mu$} is 
defined as:
\begin{equation}
	X_{t+1} \triangleq \mathcal{R}(X_t, \mu) \triangleq \{ A x + B \mu(x) | x \in X_t\}.
\end{equation}

For a compact, convex polytope, $X_0 \subset \mathbb{R}^n$, the 
\textbf{$T$-step reachable set from $X_0$ under control $\mu$} is the set $X_T$ 
that is defined according to the recursion:
\begin{equation}
	X_t \triangleq \mathcal{R}(X_{t-1}, \mu), t = 1, \dots, T.
\end{equation}
\end{definition}

In one instance, we will be interested in computing $X_{t}$ \emph{exactly} from 
$X_{t-1}$ (or by recursive application, $X_0$). However, we will also be 
interested in two different approximations for the reachable set $X_{t}$:  a 
one-step bounding box for $X_{t}$ from $X_{t-1}$, and a bounding box for  
$X_{t}$ obtained by propagating bounding boxes from $X_0$. Thus, we have the 
following.

\begin{definition}[One-Step $\epsilon$-Bounding Box]
\label{def:one_step_box}
	Let $A$, $B$, $X_t$ and $\mu$ be as in 
	\cref{def:closed_loop_reachable_set}. Then a \textbf{one-step $\epsilon$  
	bounding box reachable from $X_t$} is a box $\mathtt{B}_{t+1} 
	\negthickspace = \negthinspace {\bigtimes}_{i=1}^{n} [l_i, r_i] 
	\negthinspace \subset \negthinspace \mathbb{R}^n$ s.t.:

	\begin{enumerate}[left=0.5\parindent,label={\itshape (\roman*)}]
		\item $X_{t+1} = \mathcal{R}(X_{t},\mu) \subseteq  
			\mathtt{B}_{t+1}$; and 

		\item for each $i = 1, \dots, n$, there exist points $x_{l_i},  
			x_{r_i} \in \mathcal{R}(X_{t},\mu)$ such that:
			\begin{equation}
			 	\left| \llbracket x_{l_i} \rrbracket_{[i,:]} - l_i\right| < \epsilon \text{ and } 
			 	\left| \llbracket x_{r_i} \rrbracket_{[i,:]} - r_i\right| < \epsilon.
			\end{equation}
	\end{enumerate}
\end{definition}

The idea of a one-step $\epsilon$ bounding box can be extended to  
\emph{approximate} reachability by propagating one-step bounding boxes 
recursively instead of the previous reachable set itself.

\begin{definition}[$\epsilon$-Bounding Box Propagation]
\label{def:bounding_box_propagation}
	Let $A$, $B$, $X_0$ and $\mu$ be as in \cref{def:closed_loop_reachable_set}.

	Let $\mathtt{B}_0 \triangleq X_0$ by convention. Then an 
	\textbf{$\epsilon$-bounding box propagation} of $X_0$ is a sequence of 
	bounding boxes, $\mathtt{B}_{t}^{X_0}$, $t = 0, \dots, T$ such that:

	\begin{itemize}
		\item for all $t = 1, \dots, T$, $\mathtt{B}_{t}^{X_0}$ is an  
			$\epsilon$-bounding box for the system with initial set of states 
			$\mathtt{B}_{t-1}^{X_0}$.
	\end{itemize}
\end{definition}
\noindent \marginnote{\g{(E6)}}\g{Note: although an $\epsilon$-bounding box 
propagation only approximates the reachable set $X_t$, the amount of 
over-approximation depends only on $\epsilon$ and the dynamics -- \emph{not the 
controller}. Thus, any desired approximation error to $X_t$ can be obtained by 
computing $\epsilon$-box propagations of suitably small \emph{subsets} of $X_0$ 
(to compensate for the propagation of each bounding box approximation through 
the dynamics).}

\e{Finally, as a consequence of considering ReLU NNs and 
polytopic state sets, our complexity results can be written in terms of the 
complexity of solving a linear program (LP).}\marginnote{\e{(E7)}}
\begin{definition}[LP Complexity]
\label{def:lp_complexity}
	Let $\text{LP}(\eta, \nu)$ be the complexity of an LP in dimension $\nu$ 
	with $\eta$ inequality constraints.
\end{definition}
This complexity is polynomial in both parameters, subject to the usual caveats 
associated with digital arithmetic.

\section{Exact Reachability for TLL NN Controllers} 
\label{sec:exact_reachability_for_tll_nn_controllers}
Our first complexity result shows that exact one-step reachability for an LTI 
system controlled by a TLL NN is computable in polynomial time in the size of 
the TLL.

\begin{theorem}\label{thm:one_step_exact}
	Let $A$, $B$, and $X_t$ be as defined in 
	\cref{def:closed_loop_reachable_set}, where $X_t$ is the intersection of 
	$N_{X_t}$ linear constraints. Moreover, suppose this system is controlled 
	by a state-feedback TLL NN controller $\nn\subarg{\Xi_{\scriptscriptstyle 
	N,M}^{(m)}} \negthickspace : \negthickspace \mathbb{R}^n \negthickspace 
	\rightarrow  \negthickspace \mathbb{R}^m$ 
	(\cref{sub:two_layer_lattice_neural_networks}).

	For a fixed state dimension, $n$, the reachable set $X_{t+1}$ can be 
	represented as the union of at most
		$O( m^n \cdot N^{2 n} / n! )$
	compact, convex polytopes, and these polytopes can be computed in time 
	complexity \g{at most} (also for fixed $n$):\marginnote{\g{(E8)}}
	\begin{equation*}
		O( (m \cdot N)^{2^{n+1}} \negthickspace \cdot m^{n+2} \cdot n \cdot M \cdot N^{2n + 3} \cdot \text{LP}(m N^2 + N_{X_t}, n) /n! ).
	\end{equation*}
\end{theorem}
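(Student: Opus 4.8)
The plan is to exploit the fact that a multi-output TLL NN is a continuous piecewise-affine (CPWA) map whose regions of affinity are exactly the cells of an \emph{explicit} hyperplane arrangement; this turns the one-step reachable set into a finite union of affine images of polytopes, one per cell, each of which can be identified and computed by linear programming. First I would make the CPWA structure explicit using the lattice representation of \cref{def:scalar_tll}. For each output component $p = 1,\dots,m$, the scalar TLL realizes $\max_j \min_{k \in s_j^{(p)}} \ell_k^{(p)}$, so its value coincides pointwise with one of its $N$ local linear functions $\ell_1^{(p)},\dots,\ell_N^{(p)}$, and it can only switch between $\ell_i^{(p)}$ and $\ell_{i'}^{(p)}$ across the hyperplane $\{x : \ell_i^{(p)}(x) = \ell_{i'}^{(p)}(x)\}$. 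Collecting these switching hyperplanes over all $p$ and all pairs $i < i'$ yields an arrangement $\mathcal{A}$ of at most $m\binom{N}{2} = O(mN^2)$ hyperplanes in $\mathbb{R}^n$. On the interior of (the closure of) each full-dimensional cell $C$ of $\mathcal{A}$, every component of $\nn$ equals a fixed local linear function, so $\nn|_C$ is affine; write $\nn|_C(x) = G_C x + h_C$ with $G_C$ an $(m\times n)$ matrix and $h_C \in \mathbb{R}^m$.

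Next I would reduce reachability to affine images of polytopes. On each cell the one-step map is affine, $x \mapsto Ax + B\nn(x) = (A + BG_C)x + Bh_C$, so $X_{t+1} = \bigcup_C \big[(A + BG_C)(\overline{C}\cap X_t) + Bh_C\big]$, a union of affine images of the polytopes $\overline{C}\cap X_t$, each itself a polytope. The number of terms is the number of full-dimensional cells of $\mathcal{A}$. Using the standard bound $\sum_{i=0}^{n}\binom{H}{i}$ on the cells of an arrangement of $H$ hyperplanes in $\mathbb{R}^n$, with $H = O(mN^2)$ and $n$ fixed the dominant term is $\binom{H}{n} \approx (mN^2)^n/n! = m^n N^{2n}/n!$, which establishes the claimed count.

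For the time bound I would describe a three-part procedure. (i) Enumerate the full-dimensional cells of $\mathcal{A}$ together with an interior witness point for each, discarding cells whose intersection $\overline{C}\cap X_t$ is empty; each non-emptiness test is an LP feasibility problem over the $O(mN^2)$ arrangement constraints together with the $N_{X_t}$ constraints defining $X_t$, in dimension $n$ — this is the source of the factor $\text{LP}(mN^2 + N_{X_t}, n)$ of \cref{def:lp_complexity}. (ii) At each witness point, read off the active local linear function in each of the $m$ components by evaluating the two-level lattice, which costs $O(m\cdot M\cdot N\cdot n)$ per cell (contributing the $M$, $N$, $n$ and one $m$ factor) and yields $G_C$ and $h_C$. (iii) Compute the affine image $(A+BG_C)(\overline{C}\cap X_t)+Bh_C$, which in the fixed dimension $n$ is again a polytope, obtained by converting to a vertex description, applying the affine transformation, and taking the convex hull. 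Multiplying the per-cell cost by the cell count of part two yields a bound of the stated form.

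The main obstacle is the interaction of the cell enumeration in (i) with the image computation in (iii), which is where the large factor $(mN)^{2^{n+1}}$ arises. Enumerating the cells and pushing their polytopic descriptions through the affine images in fixed dimension requires repeated conversion between facet and vertex representations; the sizes of these intermediate representations can compound — roughly squaring — across the $O(n)$ recursive dimension-reduction (or incremental-insertion) steps, producing a bound that is polynomial in $mN$ for each fixed $n$ but whose exponent grows like $2^{n+1}$. Making this accounting rigorous — in particular, verifying that the enumeration visits only polynomially many (in $mN$, for fixed $n$) candidate cells while correctly pruning empty intersections through the LP oracle, and that the witness points unambiguously identify the active affine piece on each cell interior — is the crux of the argument; the counting of part two and the affine-image manipulations are then routine consequences of the CPWA decomposition.
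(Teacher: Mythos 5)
Your proposal is correct and follows essentially the same route as the paper's proof: the paper likewise identifies the affine regions of the multi-output TLL with the cells of a hyperplane arrangement of $O(m \cdot N^2)$ switching hyperplanes (citing \cite{FerlezBoundingComplexityFormally2020} for the cell enumeration and active-function identification, with the $\text{LP}(m N^2 + N_{X_t}, n)$ factor arising exactly as you say, from intersecting the cells with $X_t$), counts cells the same way to get $O(m^n N^{2n}/n!)$, and then pushes each cell through the per-region affine dynamics. The only substantive difference is in the last step: the paper computes each affine image via Fourier--Motzkin elimination, whose per-variable squaring of the constraint count is precisely what yields the $O((m \cdot N)^{2^{n+1}})$ factor that you attribute somewhat more loosely to compounding facet/vertex representation conversions.
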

\begin{proof}
	This follows almost directly from the result in 
	\cite{FerlezBoundingComplexityFormally2020}, where it is shown that a 
	multi-output TLL with parameters $\Xi_{\scriptscriptstyle N,M}^{(m)}$ has 
	at most as many linear (affine) regions as there are regions in a 
	hyperplane arrangement with $O(m \cdot N^2)$ hyperplanes.  Clearly, each of 
	these potential regions can contribute one polytope to the reachable set 
	$X_{t+1}$. According to \cite{FerlezBoundingComplexityFormally2020}, these 
	regions can be enumerated in time complexity:
	\begin{equation}\label{eq:main_thm_conclusion_proof}
		O(m^{n+2} \cdot n \cdot M \cdot N^{2n + 3} \cdot \text{LP}(m N^2 + N_{X_t}, n) /n!)
	\end{equation}
	which includes the complexity of identifying the active linear function on  
	each of those regions  \cite[Proposition  
	4]{FerlezBoundingComplexityFormally2020}. \marginnote{\g{(E9)}}\g{The LP 
	complexity in \eqref{eq:main_thm_conclusion_proof} depends on $N_{X_t}$ 
	because it is necessary to obtain the intersection of the $O(m \cdot N^2)$ 
	regions with $X_{t}$.}

	Thus, it remains to determine the reachable set with respect to the TLL's 
	realized affine function on each such region. The complexity of this 
	operation is bounded by the complexity of transforming each such polytope 
	through the $A$ matrix and $B$ times the affine function realized by the 
	TLL on that region. This can be accomplished by Fourier-Motzkin elimination 
	to determine the resulting  polytopes that add together to form the 
	associated constituent polytope  of $X_{t+1}$. This operation has 
	complexity $O((m \cdot N)^{2^{n+1}})$.
\end{proof}






\section{Bounding-Box Reachability for TLL NN Controllers} 
\label{sec:bounding_box_reachability_for_tll_nn_controllers}

\begin{figure}[t]
\vspace{6pt}
\centering
\includegraphics[width=.83\linewidth,trim={0cm 0cm 0cm 0cm},clip]{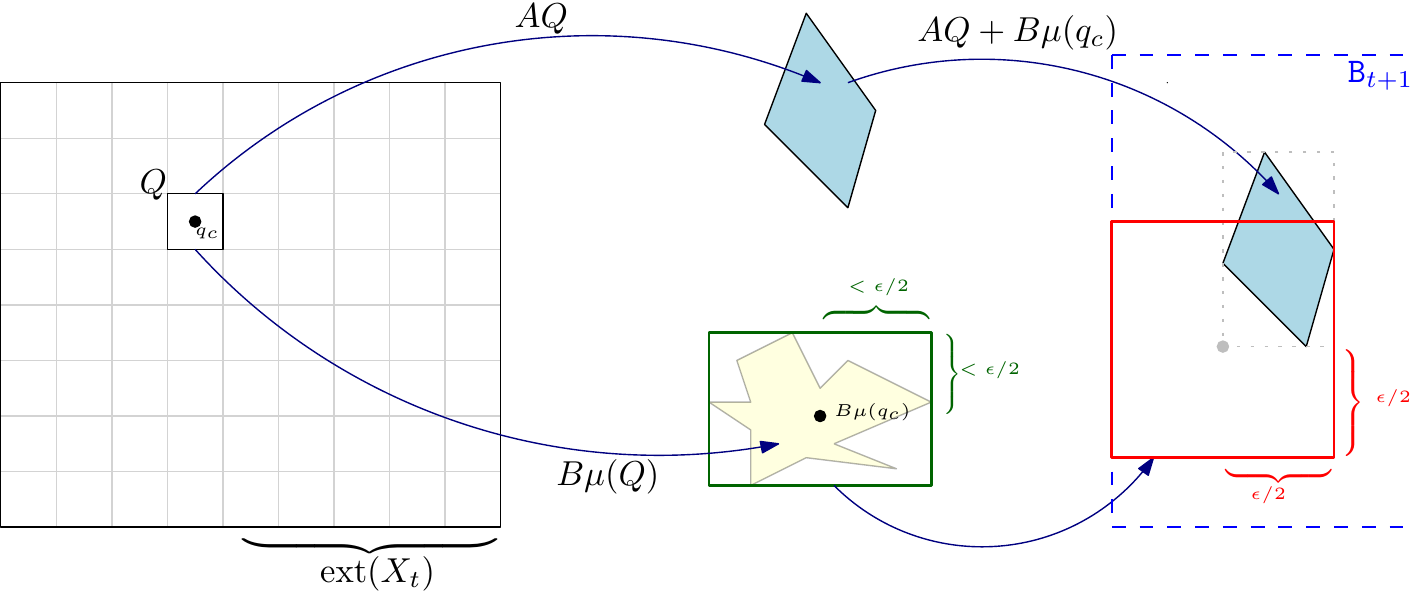}%
\caption{Illustration of the proof of \cref{prop:generic_lipschitz_reachability}}
\label{fig:lipschitz_reachability}
\vspace{-20pt}
\end{figure}

We begin with the following useful definition.

\begin{definition}[Center/Extent of $X \subset \mathbb{R}^n$]
	Let $X \subset \mathbb{R}^n$ be a compact set. Then the \textbf{center of 
	$X$} is the point $x_c \in \mathbb{R}^n$ such that for each $i = 1, \dots, 
	n$:
	\begin{equation}
		\llbracket x_c \rrbracket_{[i,:]} \triangleq \tfrac{1}{2} \cdot 
			\big( 
				\min_{\scriptscriptstyle x\in X} \llbracket x \rrbracket_{[i,:]}
				+
				\max_{\scriptscriptstyle x\in X} \llbracket x \rrbracket_{[i,:]}
			\big)
	\end{equation}

	Also, define the \textbf{extent of $X$ along coordinate $i$} as:
	\begin{equation}
		\text{ext}_i(X) \triangleq \max_{x \in X} \llbracket x - x_c \rrbracket_{[i,:]},
	\end{equation}
	and the \textbf{extent of $X$} as $\text{ext}(X) \triangleq  \max_{i = 1, 
	\dots, n} \text{ext}_i(X)$.
\end{definition}

\subsection{One-Step $\epsilon$ Bounding Box Reachability} 
\label{sub:one_step_bounding_box_reachability}

Now we can state our second main result: that a one-step bounding box can be 
computed in polynomial time in the size of a TLL NN controller. We provide two  
such results, each of which is polynomial in different aspects of the problem.

\begin{theorem}\label{thm:tll_size_one_step_box}
	Let $A$, $B$, $X_t$ and $\nn\subarg{\Xi_{\scriptscriptstyle N,M}^{(m)}}$ be 
	as in the statement of \cref{thm:one_step_exact}.

	Then a one-step $\epsilon=0$ bounding box from $X_t$ is computable in time 
	\marginnote{\g{(E10)}}complexity \g{at most} (for fixed dimension, $n$):
	\begin{equation}\label{eq:box_exact_complexity}
		O( m^{n+2} \cdot n^2 \cdot M \cdot N^{2n + 3} \cdot \text{LP}(m N^2 + N_{X_t}, n) /n! ).
	\end{equation}
\end{theorem}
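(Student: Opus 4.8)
The plan is to reduce the bounding-box computation to $2n$ scalar optimizations — a minimum and a maximum along each coordinate — and to solve each of them with exactly the region-enumeration machinery already invoked in \cref{thm:one_step_exact}, but replacing the expensive polytope-transformation step with a single linear program per region. First I would observe that an $\epsilon = 0$ bounding box is precisely the coordinate-aligned bounding box of $X_{t+1} = \mathcal{R}(X_t,\nn)$; by \cref{def:one_step_box} this amounts to computing, for each $i = 1, \dots, n$,
\[
l_i = \min_{x \in X_t} \llbracket A x + B\nn(x) \rrbracket_{[i,:]}, \qquad r_i = \max_{x \in X_t} \llbracket A x + B\nn(x) \rrbracket_{[i,:]},
\]
where $\nn \triangleq \nn\subarg{\Xi_{\scriptscriptstyle N,M}^{(m)}}$. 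Condition \emph{(i)} of \cref{def:one_step_box} then holds by construction, and condition \emph{(ii)} holds \emph{with $\epsilon = 0$} because $X_t$ is compact and $x \mapsto Ax + B\nn(x)$ is continuous, so by the Weierstrass theorem the extremal values are attained at genuine points of $\mathcal{R}(X_t,\nn)$, which serve as the required witnesses $x_{l_i}, x_{r_i}$.

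Next I would exploit the piecewise structure. Each objective $x \mapsto \llbracket Ax + B\nn(x)\rrbracket_{[i,:]} = \llbracket A\rrbracket_{[i,:]}\, x + \llbracket B\rrbracket_{[i,:]}\, \nn(x)$ is a scalar continuous piecewise-affine function of $x$: since $\nn\subarg{\Xi_{\scriptscriptstyle N,M}^{(m)}}$ is affine on each region of the hyperplane arrangement with $O(m \cdot N^2)$ hyperplanes — the very fact used in the proof of \cref{thm:one_step_exact} via \cite{FerlezBoundingComplexityFormally2020} — so is $Ax + B\nn(x)$, and hence so is each of its coordinates. Because these regions partition $X_t$ and the objective is affine on each, the exact minimum (resp. maximum) of coordinate $i$ over $X_t$ is the best value among the per-region optima. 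Thus to compute a single $l_i$ (resp. $r_i$) I would enumerate those same regions and identify the active local affine function on each — exactly the enumeration step of complexity \eqref{eq:main_thm_conclusion_proof} — and on each region solve the LP that minimizes (resp. maximizes) the affine objective over the region intersected with $X_t$, an LP with $O(m N^2 + N_{X_t})$ constraints in dimension $n$, i.e. cost $\text{LP}(m N^2 + N_{X_t}, n)$.

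The observation that makes this polynomial — and the step I expect to be the conceptual crux — is that for a \emph{bounding box} one never needs the full polytopic image of each region under $x \mapsto Ax + B\nn(x)$. In \cref{thm:one_step_exact} that image was obtained by Fourier–Motzkin elimination, contributing the doubly-exponential factor $O((m\cdot N)^{2^{n+1}})$; here each region instead contributes only a pair of extremal LPs, whose cost $\text{LP}(m N^2 + N_{X_t}, n)$ is already dominated by the per-region cost of the enumeration itself. Consequently the computation of one coordinate's extent costs exactly \eqref{eq:main_thm_conclusion_proof}. Finally I would note that this must be done for all $n$ coordinates (in both directions), which multiplies \eqref{eq:main_thm_conclusion_proof} by $O(n)$; since \eqref{eq:main_thm_conclusion_proof} already carries one factor of $n$, this yields the $n^2$ dependence and the remaining factors in \eqref{eq:box_exact_complexity}, completing the bound. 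The only care needed is the routine verification that the per-region extent LPs do not dominate the enumeration and that the closed-cell optima indeed realize the min/max over the compact set $X_t$.
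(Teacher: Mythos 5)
Your proposal is correct and follows essentially the same route as the paper's proof, which likewise reuses the region enumeration of \cref{thm:one_step_exact} and replaces the Fourier--Motzkin step by $2n$ per-region LPs (one min and one max per coordinate), combined across regions to give the exact box with the extra factor of $n$ in \eqref{eq:box_exact_complexity}. Your write-up merely spells out what the paper's two-sentence proof leaves implicit --- that each per-region bounding LP is the composed affine objective $\llbracket A\rrbracket_{[i,:]}x + \llbracket B\rrbracket_{[i,:]}\nn(x)$ over the region intersected with $X_t$, and that attainment on the closed cells furnishes the witnesses of \cref{def:one_step_box} --- so no substantive difference remains.
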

\begin{proof}
This follows almost directly from the proof of \cref{thm:one_step_exact}. For 
each of the convex polytopes describing the reachable set, the Fourier-Motzkin 
elimination can be replaced by $2 \cdot n$ LPs to compute its bounding box; 
these can be combined to determine an $\epsilon=0$ bounding box for $X_{t+1}$  
without increasing the complexity noted above.
\end{proof}

The result in \cref{thm:tll_size_one_step_box} certainly meets the criteria of 
a polynomial-time computation of a one-step bounding box. Unfortunately, the 
dependence on $N$ and $M$ in \cref{thm:tll_size_one_step_box} is significant 
despite being polynomial.  However, since we are considering only bounding box 
reachability, it makes sense to regard the TLL controller as a generic 
Lipschitz-continuous controller instead: this allows the dependence on its size 
to be replaced with a dependence on its Lipschitz constant, at the expense of 
additional (polynomial) dependence on the size of the set $X_t$ and the norm of 
the matrix $B$.

\begin{proposition}\label{prop:generic_lipschitz_reachability}
	Let $A$, $B$, $X_t$ and $\mu$ be as in the statement of 
	\cref{def:closed_loop_reachable_set}. Furthermore, suppose that $\mu$ is 
	Lipschitz continuous on $X_t$, with Lipschitz constant at most $\lVert \mu 
	\rVert$.

	\marginnote{\g{(E11)}}Then an $\epsilon$-bounding box from $X_t$ is 
	computable in complexity \g{at  most} (for fixed dimension, $n$):
	\begin{equation}\label{eq:lipschitz_reachability_generic}
		O( (2\cdot\text{ext}(X_t) \cdot \tfrac{2 \lVert B \rVert \cdot \lVert \mu \rVert}{\epsilon} )^n 
		\cdot LP(2\cdot n, n)).
	\end{equation}
\end{proposition}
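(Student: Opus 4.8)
The plan is to reduce the problem to a finite grid computation that exploits Lipschitz continuity of $\mu$ to replace the nonlinear controller by a locally-constant one on each grid cell, while treating the linear part $Ax$ exactly. First I would enclose $X_t$ in its axis-aligned bounding box, which (using $\text{ext}(X_t) = \max_i \text{ext}_i(X_t)$) is contained in a cube of side length $2\,\text{ext}(X_t)$ in every coordinate, and partition that cube into a uniform grid of cubic cells of side length $s \triangleq \epsilon/(2\lVert B\rVert \cdot \lVert\mu\rVert)$. This produces $O\big((2\,\text{ext}(X_t)/s)^n\big) = O\big((2\,\text{ext}(X_t)\cdot 2\lVert B\rVert\lVert\mu\rVert/\epsilon)^n\big)$ cells, which already accounts for the first factor of the claimed complexity.

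Next, for each cell $C$ that meets $X_t$ I would test feasibility, pick a sample point $x_C \in C\cap X_t$, and evaluate $\mu(x_C)$ once. Since any two points of a cell are within max-norm distance $s$ and $\mu$ is $\lVert\mu\rVert$-Lipschitz, for every $x\in C\cap X_t$ we have $\lVert B\mu(x) - B\mu(x_C)\rVert \le \lVert B\rVert\lVert\mu\rVert s = \epsilon/2$. Hence the true contribution $\{Ax+B\mu(x) : x\in C\cap X_t\}$ differs coordinate-wise by at most $\epsilon/2$ from the image $\{Ax+B\mu(x_C) : x\in C\cap X_t\}$ of a single fixed affine map. The coordinate-wise bounding box of this affine image is obtained by $2n$ linear programs that minimize and maximize each output coordinate of $x\mapsto Ax+B\mu(x_C)$, each an LP in dimension $n$ over a single cell (a box, hence $2n$ constraints) at cost $\text{LP}(2n,n)$. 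Enlarging each resulting cell box by $\epsilon/2$ and taking the coordinate-wise union (minimum of lower faces, maximum of upper faces) over all cells yields the candidate box $\mathtt{B}_{t+1}$.

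I would then verify the two defining properties. Containment, \cref{def:one_step_box}(i), is immediate: every reachable point $Ax+B\mu(x)$ lies in the cell $C$ containing $x$, hence within $\epsilon/2$ of that cell's affine-image box, hence inside its $\epsilon/2$-enlargement and therefore inside the union. For $\epsilon$-tightness, \cref{def:one_step_box}(ii), I would trace each computed face back to a genuine reachable point: the winning cell $C'$ for the upper face $r_i$ supplies a maximizer $x^\sharp\in C'\cap X_t$ of the affine coordinate, and the reachable point $Ax^\sharp + B\mu(x^\sharp)$ then differs from $r_i$ by at most the control approximation ($\le\epsilon/2$) plus the enlargement ($\epsilon/2$), which I would make strictly less than $\epsilon$ by taking $s$ marginally below the threshold above. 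The total cost is the number of cells times the per-cell cost, namely $O\big((2\,\text{ext}(X_t)\cdot 2\lVert B\rVert\lVert\mu\rVert/\epsilon)^n \cdot \text{LP}(2n,n)\big)$, since for fixed $n$ the $2n$ LPs per cell and the coordinate-wise merge contribute only constant factors.

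The main obstacle is precisely this tightness argument at the boundary of $X_t$, i.e. keeping the affine part exact so that only $\lVert B\rVert\lVert\mu\rVert$ — and not $\lVert A\rVert$ — enters the grid resolution. If the affine extremum on a boundary cell were taken over the full cube rather than over $C\cap X_t$, the maximizer $x^\sharp$ could fall outside $X_t$ and the nearest genuine reachable point could be up to $\lVert A\rVert s$ away, breaking $\epsilon$-tightness and forcing $\lVert A\rVert$ into $s$. The delicate step is therefore to confine each affine optimization (and the feasibility check producing $x_C$) to $C\cap X_t$, so that the returned extremal input is itself reachable, while still charging only a box-sized LP per cell. I would resolve this either by intersecting each cell with $X_t$ and arguing that the extra constraints do not change the asymptotic per-cell cost, or by isolating the thin layer of boundary cells and bounding their affine ranges by $O(\lVert A\rVert s)$; confirming that one of these preserves both $\epsilon$-tightness and the stated per-cell $\text{LP}(2n,n)$ cost is the key technical point.
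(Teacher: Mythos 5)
Your proposal is correct and matches the paper's proof essentially step for step: the same covering of $X_t$ by $O\big((2\,\text{ext}(X_t)\cdot 2\lVert B\rVert \lVert\mu\rVert/\epsilon)^n\big)$ hypercubes of edge width $\epsilon/(2\lVert B\rVert\lVert\mu\rVert)$, the same Lipschitz estimate freezing the control at one sample per cell (the paper uses the cell center $q_c$ where you use a feasible point $x_C$), and the same per-cell bounding box of the affine image at cost $\text{LP}(2n,n)$, merged coordinate-wise over cells. The boundary-cell tightness issue you flag as the ``key technical point'' is not treated more carefully in the paper itself---there the box $\text{box}(Q^\prime)$ is computed over the whole cube $Q$ rather than $Q\cap X_t$, and containment/tightness is argued informally---so your explicit $\epsilon/2$-plus-$\epsilon/2$ accounting is, if anything, more precise than the published argument.
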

\begin{proof}
$X_t$ can be covered by a grid of $(2\cdot\text{ext}(X_t) \cdot 2 \lVert B 
\rVert \cdot \lVert \mu \rVert/\epsilon)^n$ hypercubes whose edges are of width 
$\epsilon/(2\lVert B \rVert \cdot \lVert \mu \rVert)$. Denote by $Q$ an 
arbitrary such hypercube, and let $q_c$ denote its center. Now observe that for 
all $q \in Q$:
\begin{equation}\label{eq:generic_lipschitz_output_box}
	\lVert B \mu(q) - B \mu(q_c) \lVert \leq  
\lVert B \rVert \cdot \lVert \mu \rVert \cdot \lVert q - q_c \rVert \leq  
\tfrac{\epsilon}{2} < \epsilon.
\end{equation}
Consequently, the set $Q^\prime \triangleq \{x : \lVert A Q + B \mu(q_c) - x 
\rVert < \epsilon\}$ is guaranteed to be in any $\epsilon$-bounding box of 
$X_{t+1}$, as is the exact bounding thereof, which we denote by  
$\text{box}(Q^\prime)$; see \cref{fig:lipschitz_reachability}. Clearly 
$\cup_{Q}  \text{box}(Q^\prime)$ is likewise so contained.

Thus, an $\epsilon$-bounding box from $X_t$ can be obtained by examining each  
$Q$ and computing $\text{box}(Q^\prime)$. The latter operation entails 
computing a bounding box for $AQ$, which has the complexity of $LP(2 \cdot n, 
n)$; see \cref{fig:lipschitz_reachability}.
\end{proof}

In particular, for some problems, the quantities in 
\eqref{eq:lipschitz_reachability_generic} may be much smaller than terms like 
$N^{2n}$ in \eqref{eq:box_exact_complexity}. In fact, this explains why this 
type of result is typically not used for NN reachability: it is computationally 
expensive to compute the exact Lipschitz constant of a generic NN -- indeed, it 
is of exponential complexity in the number of neurons for a deep NN. For a 
non-degenerate TLL NN, however, it is trivial to compute its exact Lipschitz 
constant (over all of $\mathbb{R}^n$)\footnote{Even when this bound is 
approximate, it depends only on the parameters of one layer; for general deep 
ReLU NNs, a bound of similar computational complexity involves multiplying 
weight matrices of successive layers.}.

\begin{lemma}\label{lem:lipschitz_bound}
	A bound on the Lipschitz constant of a TLL NN $\Xi_{\scriptscriptstyle 
	N,M}^{(m)}$ over $\mathbb{R}^n$ is computable in complexity $O(m 
	\negthinspace \cdot \negthinspace N \negthinspace \cdot \negthinspace n)$. 
	For a non-degenerate TLL (\cref{def:non_degenerate_tll}), this bound is 
	tight.
\end{lemma}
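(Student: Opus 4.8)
The plan is to reduce the Lipschitz constant of the entire TLL to the gradients of its local linear functions, exploiting the fact that a scalar TLL realizes only a min--max (lattice) combination of affine maps. Recall from \cref{def:scalar_tll} that the scalar TLL computes $\nn\subarg{\Xi_{\scriptscriptstyle N,M}}(x) = \max_{j=1,\dots,M}\min_{k\in s_j}\ell_k(x)$, where each local linear function $\ell_k(x) = \llbracket W_\ell\rrbracket_{[k,:]}\,x + \llbracket b_\ell\rrbracket_{[k,:]}$ is affine. Since $\lVert\cdot\rVert$ is the max-norm, the global Lipschitz constant of $\ell_k$ is the dual norm of its gradient, namely the $1$-norm $\lVert\llbracket W_\ell\rrbracket_{[k,:]}\rVert_1 = \sum_{l=1}^{n}\lvert\llbracket W_\ell\rrbracket_{[k,l]}\rvert$. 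First I would record the elementary fact that pointwise $\min$ and $\max$ do not increase Lipschitz constants: if $f_1,\dots,f_p$ are each $L$-Lipschitz, then from $f_i(x)\le f_i(y)+L\lVert x-y\rVert$ one obtains the same inequality for $\max_i f_i$, and by symmetry $\lvert\max_i f_i(x)-\max_i f_i(y)\rvert\le L\lVert x-y\rVert$ (likewise for $\min$). Applying this to both lattice levels yields the upper bound $\max_{k=1,\dots,N}\lVert\llbracket W_\ell\rrbracket_{[k,:]}\rVert_1$ for a scalar component; for the multi-output TLL $\Xi_{\scriptscriptstyle N,M}^{(m)} = \Xi^1\parallel\cdots\parallel\Xi^m$, the max-norm on the output makes its Lipschitz constant the maximum of the component bounds, i.e. $\max_i\max_k\lVert\llbracket W_\ell^i\rrbracket_{[k,:]}\rVert_1$.

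For the complexity claim I would simply count arithmetic: each of the $m$ components supplies an $(N\times n)$ matrix $W_\ell^i$, each row-$1$-norm costs $O(n)$, there are $N$ rows per component, and the two nested maxima are cheap, giving $O(m\cdot N\cdot n)$ overall. Note this touches only the parameters of the $\Theta_\ell$ layer, which is consistent with the footnote and is exactly what makes TLLs special compared to deep ReLU NNs.

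The tightness claim under non-degeneracy (\cref{def:non_degenerate_tll}) is the step requiring the most care, though it remains short. For each $i$, non-degeneracy provides an open set $V_i$ on which the scalar TLL coincides exactly with $\ell_i$. I would choose the unit direction $v$ with $\llbracket v\rrbracket_{[l,:]} = \mathrm{sign}(\llbracket W_\ell\rrbracket_{[i,l]})$, so that $\lVert v\rVert = 1$ and $\llbracket W_\ell\rrbracket_{[i,:]}\,v = \lVert\llbracket W_\ell\rrbracket_{[i,:]}\rVert_1$; since $V_i$ is open there is a segment $x,\,x+tv\in V_i$ for small $t>0$, and the difference quotient of the TLL along it equals $\lVert\llbracket W_\ell\rrbracket_{[i,:]}\rVert_1$ exactly. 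Hence the global Lipschitz constant is at least $\lVert\llbracket W_\ell\rrbracket_{[i,:]}\rVert_1$ for every $i$, so at least the maximum over $i$, matching the upper bound; taking the maximum over components (the achieving component dominating the output max-norm) closes the multi-output case. The main obstacle is thus making this direction-achieving argument precise --- it hinges on the dual of the max-norm being the $1$-norm and on the openness of $V_i$ --- whereas the upper bound and the complexity count are routine.
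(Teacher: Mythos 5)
Your proposal is correct and follows essentially the same route as the paper: bound the Lipschitz constant by the largest (dual-norm) gradient among the $N$ local linear functions per output, use non-degeneracy (\cref{def:non_degenerate_tll}) to realize each $\ell_i$ on an open set and so match the bound from below, and count $O(m \cdot N \cdot n)$ arithmetic over the rows of the $\Theta_\ell$ layers. The only difference is that you prove inline (via the lattice representation, the nonexpansiveness of pointwise $\min$/$\max$, and the sign-vector direction achieving the $1$-norm) what the paper delegates to citations of prior work, which makes your version self-contained but not a different argument.
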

\begin{proof}
	This is a straightforward application of \cite[Proposition  
	3]{FerlezBoundingComplexityFormally2020} or the related result  
	\cite[Proposition 4]{CruzSafebyRepairConvexOptimization2021}. The only 
	affine functions realizable by a TLL are those described by its linear 
	layer (see \cref{def:scalar_tll}). If the TLL is not degenerate, then each 
	of those is realized in its output, hence also lower bounding its Lipschitz 
	constant. The claim follows, since there are $N$ such local linear 
	functions per output, each of whose Lipschitz constants can be computed in 
	$O(n)$.
\end{proof}

\begin{theorem}\label{thm:box_alg_choice}
	Let $A$, $B$, $X_t$ and $\nn\subarg{\Xi_{\scriptscriptstyle N,M}^{(m)}}$ be 
	as in the statement of \cref{thm:one_step_exact}. 

	Then for any $\epsilon > 0$, an  $\epsilon$-bounding box from $X_t$ can  be 
	computed with complexity no more than the maximum of 
	\eqref{eq:box_exact_complexity} and 
	\eqref{eq:lipschitz_reachability_generic} for $\lVert 
	\nn\subarg{\Xi_{\scriptscriptstyle N,M}^{(m)}} \rVert$ bounded according to 
	\cref{lem:lipschitz_bound}.
\end{theorem}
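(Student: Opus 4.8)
The plan is to build an adaptive meta-procedure that runs exactly \emph{one} of the two bounding-box algorithms already established — the TLL-size algorithm of \cref{thm:tll_size_one_step_box} and the Lipschitz/grid algorithm of \cref{prop:generic_lipschitz_reachability} — after first deciding which is cheaper, and then to account carefully for the (unavoidable) cost of making that decision. First I would record that each constituent algorithm already produces a valid $\epsilon$-bounding box. \cref{thm:tll_size_one_step_box} returns the exact ($\epsilon=0$) box, whose faces are attained by genuine reachable points; since $0 < \epsilon$, this exact box satisfies \cref{def:one_step_box} for every $\epsilon > 0$, at cost \eqref{eq:box_exact_complexity}. \cref{prop:generic_lipschitz_reachability} returns an $\epsilon$-box at cost \eqref{eq:lipschitz_reachability_generic}, but only once a Lipschitz constant $\lVert \mu \rVert$ for $\mu = \nn\subarg{\Xi_{\scriptscriptstyle N,M}^{(m)}}$ is in hand; \cref{lem:lipschitz_bound} supplies such a bound, and for a non-degenerate TLL it is tight, so \eqref{eq:lipschitz_reachability_generic} is a faithful estimate rather than a mere over-count.

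The heart of the argument is the selection step, which also explains why the theorem states the \emph{maximum} rather than the minimum of the two complexities. Evaluating \eqref{eq:lipschitz_reachability_generic} at all requires knowing $\lVert \mu \rVert$, which by \cref{lem:lipschitz_bound} costs $O(m \cdot N \cdot n)$; only then can both expressions be computed from the known quantities $N$, $M$, $n$, $N_{X_t}$ and compared. I would therefore have the meta-procedure (i) compute the Lipschitz bound, (ii) evaluate and compare \eqref{eq:box_exact_complexity} and \eqref{eq:lipschitz_reachability_generic}, and (iii) run whichever algorithm carries the smaller predicted cost. Step (iii) then runs in at most the minimum of the two bounds, so the total cost is $O(m \cdot N \cdot n)$ plus that minimum.

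Closing this bookkeeping is the only real obstacle. The preprocessing overhead $O(m \cdot N \cdot n)$ is \emph{not} in general dominated by \eqref{eq:lipschitz_reachability_generic}: when the covering grid is coarse — small Lipschitz constant, small $\text{ext}(X_t)$, or large $\epsilon$ — the grid algorithm can cost as little as $O(\text{LP}(2\cdot n, n))$, which is free of $m$ and $N$. This is precisely why one cannot claim the minimum of the two bounds in all regimes. The overhead \emph{is}, however, always dominated by \eqref{eq:box_exact_complexity}, since the latter carries the factor $m^{n+2} \cdot M \cdot N^{2n+3}$, which dwarfs $m \cdot N \cdot n$ for $m, N \geq 1$. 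Hence $O(m \cdot N \cdot n)$ is at most \eqref{eq:box_exact_complexity}, and thus at most the maximum of the two bounds; meanwhile the algorithm actually run costs at most their minimum, hence at most their maximum. Adding these two contributions — each bounded by the maximum of the two complexities — yields the claimed bound, the constant factor being absorbed into the $O(\cdot)$. (An interleaved variant that runs both branches step-for-step and halts at first completion gives the same $O(\max\{\cdot,\cdot\})$ bound by the identical reasoning, since the grid branch still cannot begin until $\lVert \mu \rVert$ is computed.)
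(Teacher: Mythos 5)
Your proposal is correct, and it is worth noting that the paper actually states \cref{thm:box_alg_choice} with \emph{no proof at all}: the result is left as an immediate corollary of \cref{thm:tll_size_one_step_box}, \cref{prop:generic_lipschitz_reachability}, and \cref{lem:lipschitz_bound}, so your write-up supplies an argument the paper only implies. Your meta-procedure (compute the Lipschitz bound via \cref{lem:lipschitz_bound} at cost $O(m \cdot N \cdot n)$, evaluate both complexity formulas, then commit to the cheaper algorithm) matches the intent signaled in the paper's discussion of \myalg, which describes the theorem as requiring ``a commitment to the full computational complexity of one algorithm or the other.'' The genuinely valuable content you add is the bookkeeping around that commitment: you correctly observe that the preprocessing cost $O(m \cdot N \cdot n)$ is \emph{not} dominated by \eqref{eq:lipschitz_reachability_generic} in coarse-grid regimes, where that expression collapses to $O(\text{LP}(2 \cdot n, n))$ independently of $m$ and $N$ --- which is precisely why the theorem can only honestly assert the \emph{maximum} rather than the minimum of the two bounds --- while the overhead is always dominated by \eqref{eq:box_exact_complexity}, so the total is $O$ of the maximum. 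Two smaller points: the stated bound is in fact satisfied trivially by unconditionally running the algorithm of \cref{thm:tll_size_one_step_box}, since \eqref{eq:box_exact_complexity} is itself no more than the maximum, so your adaptive procedure proves a strictly stronger fact (cost $O(m \cdot N \cdot n)$ plus the minimum of the two bounds) that the stated maximum merely absorbs; and your explicit check that the exact $\epsilon = 0$ box of \cref{thm:tll_size_one_step_box} satisfies \cref{def:one_step_box} for every $\epsilon > 0$ (the strict inequalities hold with deviation $0 < \epsilon$, the coordinatewise extrema being attained because $X_{t+1}$ is a finite union of compact polytopes) is a step the paper never makes explicit but that the theorem's ``for any $\epsilon > 0$'' quantifier genuinely requires.
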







\section{The \myalg ~Algorithm} 
\label{sec:algorithm}

\marginnote{\e{(E12)}}\e{\cref{thm:box_alg_choice} establishes a trade-off in 
computational complexity between two methods for computing an 
$\epsilon$-bounding box from states $X_t$. However, it requires a commitment to 
the full computational complexity of one algorithm or the other. Moreover, the 
difference in computational complexity between \cref{thm:tll_size_one_step_box} 
and \cref{prop:generic_lipschitz_reachability} depends on the characteristics 
of the TLL controller on the set $X_t$ (assuming $\text{ext}(X_t)$ and  $\lVert 
B \rVert$ are fixed). If the TLL controller has relatively \emph{few linear 
regions} intersecting $X_t$ but a relatively \emph{large Lipschitz constant} on 
$X_t$, then \cref{thm:tll_size_one_step_box} will have lower complexity; recall 
that \cref{thm:tll_size_one_step_box} enumerates the linear regions of the TLL 
controller that intersect $X_t$. If, on the other hand, the TLL controller has 
relatively \emph{numerous linear regions} intersecting $X_t$ but a relatively 
\emph{small Lipschitz constant} on $X_t$, then 
\cref{prop:generic_lipschitz_reachability} will instead have lower complexity.  
Thus, the trade off in complexity between \cref{thm:tll_size_one_step_box} and 
\cref{prop:generic_lipschitz_reachability} amounts to roughly the following: 
reachability via \cref{prop:generic_lipschitz_reachability} is more efficient 
when the Lipschitz constant of the TLL  controller yields a partition of $X_t$ 
into hypercubes (see proof of \cref{prop:generic_lipschitz_reachability}) each 
of which ``typically'' contains many linear regions of the TLL.

This is a salient observation in light of the way that 
\cref{prop:generic_lipschitz_reachability} employs the Lipschitz constant of 
the TLL controller in question. In \cref{prop:generic_lipschitz_reachability}, 
the Lipschitz constant of the TLL controller is really used to create  a 
subdivision of $X_t$ into sets such that the output of the TLL controller lies  
within box of sufficiently small width: see 
\eqref{eq:generic_lipschitz_output_box}. However, note that the Lipschitz 
constant of a TLL controller over the \emph{entire set $X_t$} will generally be 
larger than the Lipschitz constant of the TLL controller on any subset of 
$X_t$. Also, there may be subsets of $X_t$ where the TLL controller rapidly 
switches between large-Lipschitz-constant affine functions such that its output 
is nevertheless confined to a small box (e.g. a high-frequency saw-tooth 
function with small amplitude). This suggests the following improvement on 
\cref{thm:tll_size_one_step_box} and 
\cref{prop:generic_lipschitz_reachability}: identify large subsets of $X_t$ 
where the output of the TLL controller is bounded within a small box -- thereby 
replacing the enumeration of many linear regions of the TLL controller, as in 
\cref{thm:tll_size_one_step_box}, \emph{or} the enumeration of many 
``Lipschitz-width'' hypercubes, as in 
\cref{prop:generic_lipschitz_reachability}.

Thus, we introduce \myalg~as a practical, ``adaptive'' algorithm, which 
implements this strategy via the recent tool FastBATLLNN 
\cite{FerlezFastBATLLNN2022}. In particular, FastBATLLNN provides a fast 
algorithm for obtaining an $\epsilon$-tight bounding box on the output of a TLL 
controller subject to a convex, polytopic input constraint. This means that 
FastBATLLNN can be used to directly and efficiently identify subsets of $X_t$ 
where the output of the TLL controller is confined to a small box, without 
enumerating all of  the linear regions of the TLL. That is, for a convex, 
polytopic set $P  \subset X_t$ and a box $\mathsf{B} =  \bigtimes_{i  = 1}^m 
[l_i, r_i]$ FastBATLLNN can efficiently decide the query:
\begin{equation}\label{eq:fastbatllnn_query}
	\forall x \in P . \nn\subarg{\Xi_{\scriptscriptstyle N,M}^{(m)}}(x) \in \mathsf{B}
\end{equation}
by an algorithm that has \emph{half} the crucial exponent of the algorithms of 
\cref{thm:one_step_exact} and \cref{thm:tll_size_one_step_box} 
\cite{FerlezBoundingComplexityFormally2020} -- i.e., \textbf{without 
enumerating the affine regions of the TLL}. Thus, for $P$ as above, a tight 
bounding box on $\nn\subarg{\Xi_{\scriptscriptstyle  N,M}^{(m)}}(P)$ can be 
obtained by roughly $\log$ invocations of FastBATLLNN in a binary search on the 
endpoints of a bounding box.

In this context, the structure of \myalg~can be summarized as follows. Start 
with a hypercube  $Q_1^0$ of edge length $2\cdot \text{ext}(X_t)$, so that 
$Q_1^0$ is large enough to capture the whole set $X_t$. Then use FastBATLLNN to 
determine a sufficiently tight bounding box on the set 
$\nn\subarg{\Xi_{\scriptscriptstyle  N,M}^{(m)}}(Q_1^0 \cap X_t)$. If this 
bounding box is small enough that its endpoints differ by less than 
$\epsilon/\lVert  B\rVert$ from its center (see 
\eqref{eq:generic_lipschitz_output_box}), then the reachable bounding box, 
$\mathsf{B}_{t+1}$ can be updated directly as in 
\cref{prop:generic_lipschitz_reachability}. Otherwise, $Q_1^0$ should be 
refined into $2^n$ hypercubes, each with half its edge lengths, denoted by 
$Q_p^1$, $p=1, \dots, 2^n$, and the process is repeated recursively on each. As 
above, the recursion stops for a hypercube $Q_p^d$ at depth $d$ only if 
FastBATLLNN returns an bounding box for $\nn\subarg{\Xi_{\scriptscriptstyle  
N,M}^{(m)}}(Q_p^d \cap X_t)$ whose endpoints are within $\epsilon/\lVert 
B\rVert$.}


\marginnote{\g{(E13)}}\g{This basic recursion is described in 
\cref{alg:ltllbox}. Three functions in \cref{alg:ltllbox} require explanation:
\begin{itemize}
	\item $\mathtt{BBox}(P)$ computes an exact bounding box for the convex 
		polytopic set $P$ using LPs;

	\item $\mathtt{Subdivide}(Q,p)$ returns the $p^\text{th}$ hypercube 
		obtained by splitting each edge of the hypercube $Q$ in half;

	\item $\mathtt{FastBATLLNN}(\Xi_{\scriptscriptstyle N,M}^{(m)}, P,  
		\epsilon)$ returns an $\epsilon$-tight bounding box on the set 
		$\nn\subarg{\Xi_{\scriptscriptstyle N,M}^{(m)}} (P)$. 
\end{itemize}
}
\setlength{\textfloatsep}{0pt}
\IncMargin{0.5em}
%
%
\begin{figure}
\vspace{4pt}
	\begin{minipage}[t]{\linewidth}
\begingroup %
\removeonelatexerror
\begin{algorithm}[H]


\SetKwData{false}{False}
\SetKwData{true}{True}
\SetKwData{feas}{Feasible}

\SetKwData{constraints}{constraints}
\SetKwData{status}{status}
\SetKwData{sol}{sol}
\SetKwData{sollist}{solnList}
\SetKwData{actconstr}{actConstr}

\SetKwData{ha}{h\_a}
\SetKwData{reg}{reg}
\SetKwData{neghypers}{negHypers}
\SetKwData{i}{i}
\SetKwData{j}{j}
\SetKwData{p}{p}
\SetKwData{tllbbox}{TLLBx}

\SetKwFunction{append}{append}
\SetKwFunction{verifyScalarLB}{verifyScalarLB}
\SetKwFunction{ltllbox}{LTLLBox}
\SetKwFunction{solvefeas}{SolveLinFeas}
\SetKwFunction{findint}{FindInt}
\SetKwFunction{all}{all}
\SetKwFunction{regions}{Regions}
\SetKwFunction{L}{l}
\SetKwFunction{getneghypers}{NegativeHyperplanes}
\SetKwFunction{bbox}{BBox}
\SetKwFunction{subdivide}{Subdivide}
\SetKwFunction{fastbatllnn}{FastBATLLNN}
\SetKwFunction{wid}{Width}
\SetKwFunction{mn}{Min}
\SetKwFunction{mx}{Max}

\SetKw{Break}{break}
\SetKw{NOT}{not}
\SetKw{foriter}{for}
\SetKw{OR}{or} 
\SetKw{IN}{in}
\SetKw{CONT}{continue}
\SetKw{GLOBAL}{global}

\SetKwInOut{Input}{input}
\SetKwInOut{Output}{output}

\Input{
\hspace{3pt}$\epsilon > 0$ \\
\hspace{3pt}$A$ an $(n \times n)$ matrix \\
\hspace{3pt}$B$ an $(n \times m)$  matrix \\
\hspace{3pt}$X_t$ a compact, convex polytope of states \\
\hspace{3pt}$\Xi_{\scriptscriptstyle N,M}^{(m)}$, parameters of a TLL NN to verify
}
\Output{
\hspace{3pt}$\mathtt{B}_{t+1} = \bigtimes_{i=1}^n [l_i, r_i]$, an $\epsilon$-bounding box from $X_t$
}
\marginnote{\g{(E14)}}
\BlankLine

\GLOBAL $d \leftarrow 0$

\GLOBAL $\mathtt{B}_{t+1} \triangleq \bigtimes_{i=1}^n [l_i, r_i] \leftarrow \bigtimes_{i=1}^n [\infty, -\infty]$

\BlankLine
\SetKwProg{Fn}{function}{}{end}%
\Fn{\ltllbox{$\epsilon$, $A$, $B$, $X_t$, $\Xi_{\scriptscriptstyle N,M}^{(m)}$}}{

	$d \leftarrow d + 1$ \tcp*[h]{\g{Increment depth counter}}

	$Q_\text{local}^d \leftarrow $ \bbox{$X_t$}

	\BlankLine

	\For{\p \IN $1, \dots, 2^n$}{

		\tcp*[h]{\g{Subdivide $Q_\text{local}^d$ into $2^n$ hypercubes}}

		$Q_p^{d+1} \leftarrow $ \subdivide{$Q_\text{local}^d$, \p}

		\BlankLine

		\tcc{Get a bounding box on TLL output to $\epsilon/2$ error for inputs in $Q_p^{d+1} \cap X_t$}

		\tllbbox $\leftarrow$ \fastbatllnn{$\Xi_{\scriptscriptstyle N,M}^{(m)}$, $Q_p^{d+1}\cap X_t$, $\tfrac{\epsilon}{2}$}

		\BlankLine

		\uIf{\wid{\tllbbox} $< \epsilon/(2 \cdot \lVert B \rVert)$}{

			\tcc{Output of TLL is small enough on $Q_p^{d+1}$ that we can update $\mathtt{B}_{t+1}$}

			\For{\i \IN $1,\dots, n$}{ 
				\If{\mn{$\llbracket$\bbox{$A Q_p^{d+1}$}$\rrbracket_{[i,:]}$} - \mn{$\llbracket$\tllbbox$\rrbracket_{[i,:]}$} $< l_i$ }{

					$l_i \leftarrow $ \mn{$\llbracket$\tllbbox$\rrbracket_{[i,:]}$}

				}
				\If{\mx{$\llbracket$\bbox{$A Q_p^{d+1}$}$\rrbracket_{[i,:]}$} + \mx{$\llbracket$\tllbbox$\rrbracket_{[i,:]}$} $> r_i$ }{

					$r_i \leftarrow $ \mx{$\llbracket$\tllbbox$\rrbracket_{[i,:]}$}

				}
			}

			\Return
		}
		\Else(\tcp*[h]{Need to refine on $Q_p^{d+1}$}){ 

			\ltllbox{$\epsilon$, $A$, $B$, $Q_p^{d+1} \cap X_t$, $\Xi_{\scriptscriptstyle N,M}^{(m)}$}

		}

	}

}
\caption{\texttt{L-TLL Box} core recursion}
\label{alg:ltllbox}
\end{algorithm}
\endgroup
	\end{minipage}
\end{figure}
\DecMargin{0.5em}
%

Formally, we have the following Theorem, which describes the  \emph{worst-case} 
runtime of \myalg.

\begin{theorem}
	Let $A$, $B$, $X_t$ and $\nn\subarg{\Xi_{\scriptscriptstyle N,M}^{(m)}}$ be 
	as in the statement of \cref{thm:box_alg_choice}.

	Then for any $\epsilon > 0$, \myalg~can compute an $\epsilon$ bounding  box 
	from $X_t$ with a worst-case time complexity of 
	\begin{multline}\label{eq:myalg_run_time}
		O\Big(
		\log_2\big[ \epsilon \cdot
			\lVert \nn\subarg{\Xi_{\scriptscriptstyle N,M}^{(m)}} \rVert \cdot \text{ext}(X_t)
				\cdot
				\log_2\big[
					\tfrac{\text{ext}(X_t)\cdot \lVert B \rVert 
					\cdot \lVert \mu \rVert}{\epsilon}
				\big] 
		\big]
		\\
		m \cdot K \cdot 2^{K\cdot n} \cdot n \cdot M \cdot \tfrac{N^{n+3}}{n!} \cdot \text{LP}(N + N_{X_t}, n) \Big)
	\end{multline}
	where 
		$K = \big\lceil \log_2\big( 2 \cdot \text{ext}(X_t) \cdot \tfrac{2 \lVert B \rVert \cdot \lVert \mu \rVert}{\epsilon}  \big) \big\rceil$.
\end{theorem}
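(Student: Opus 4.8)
The plan is to analyze the recursion tree induced by \cref{alg:ltllbox} and to bound the total runtime as (number of recursive calls) $\times$ (cost per call), after first checking that the returned box is a genuine $\epsilon$-bounding box in the sense of \cref{def:one_step_box}. For correctness I would argue as follows. The recursion begins from a hypercube of edge $2\cdot\text{ext}(X_t)$ that contains $X_t$ and only ever subdivides it, so the accepted (terminal) cubes $Q_p^{d+1}$ cover $X_t$; on each accepted cube the update of $l_i,r_i$ combines the exact box $\mathtt{BBox}(A Q_p^{d+1})$ with the FastBATLLNN box on $\nn\subarg{\Xi_{\scriptscriptstyle N,M}^{(m)}}(Q_p^{d+1}\cap X_t)$, so that $\mathtt{B}_{t+1}$ encloses every $Ax+B\nn(x)$ with $x\in X_t$, giving containment \emph{(i)}. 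For $\epsilon$-tightness \emph{(ii)} I would use the acceptance test: a cube is accepted only when its TLL output box has width below $\epsilon/(2\lVert B\rVert)$, so $B$ times that box contributes less than $\epsilon/2$ per coordinate, and together with the $\epsilon/2$ accuracy demanded of the FastBATLLNN call each recorded endpoint stays within $\epsilon$ of a truly attained reachable value, exactly as in \eqref{eq:generic_lipschitz_output_box}.

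The second step bounds the recursion depth. Reusing the Lipschitz argument of \cref{prop:generic_lipschitz_reachability}, a cube of edge $w$ maps under $\nn\subarg{\Xi_{\scriptscriptstyle N,M}^{(m)}}$ to a set of max-norm diameter at most $\lVert\nn\subarg{\Xi_{\scriptscriptstyle N,M}^{(m)}}\rVert\cdot w$, so the acceptance test is guaranteed once $w<\epsilon/(2\lVert B\rVert\cdot\lVert\nn\subarg{\Xi_{\scriptscriptstyle N,M}^{(m)}}\rVert)$. Since the initial edge is $2\cdot\text{ext}(X_t)$ and each level halves it, this threshold is reached by depth $K=\lceil\log_2(2\cdot\text{ext}(X_t)\cdot 2\lVert B\rVert\cdot\lVert\mu\rVert/\epsilon)\rceil$; here I would invoke \cref{lem:lipschitz_bound} to substitute the tightly computable $\lVert\nn\subarg{\Xi_{\scriptscriptstyle N,M}^{(m)}}\rVert$ for $\lVert\mu\rVert$. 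Each node spawns $2^n$ children, so any one of the at most $K$ levels holds at most $2^{nK}$ cubes, and the total number of recursive calls — equivalently, of FastBATLLNN invocations up to the constant factor $2^n$ at fixed $n$ — is $O(K\cdot 2^{nK})$.

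The third step bounds the work at a single call, dominated by its $2^n$ FastBATLLNN queries on $Q_p^{d+1}\cap X_t$ together with the $2n$ LPs of $\mathtt{BBox}(A Q_p^{d+1})$. I would cite the complexity of FastBATLLNN \cite{FerlezFastBATLLNN2022,FerlezBoundingComplexityFormally2020}, which decides the box query \eqref{eq:fastbatllnn_query} with \emph{half} the crucial exponent of \cref{thm:tll_size_one_step_box} — i.e. $N^{n+3}$ in place of $N^{2n+3}$ and a linear factor $m$ (one scalar query per output component) instead of $m^{n+2}$ — at LP cost $\text{LP}(N+N_{X_t},n)$, the constraint count being the $N$ local linear functions of one component intersected with the $N_{X_t}$ constraints of $X_t$. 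The leading logarithmic prefactor then arises from the binary search FastBATLLNN runs on each endpoint to reach $\epsilon/2$ accuracy: the iteration count is logarithmic in the ratio of the TLL output range (bounded by $\lVert\nn\subarg{\Xi_{\scriptscriptstyle N,M}^{(m)}}\rVert$ times the cube edge) to the accuracy, and after folding in the depth-dependent edge and the value of $K$ this assumes the displayed nested-logarithm form. Multiplying the call count $O(K\cdot 2^{nK})$, the per-component factor $m$, the single-scalar half-exponent cost $n\cdot M\cdot N^{n+3}/n!\cdot\text{LP}(N+N_{X_t},n)$, and the binary-search prefactor yields \eqref{eq:myalg_run_time}.

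I expect the main obstacle to be the bookkeeping that links the binary-search depth of FastBATLLNN to the recursion depth: both the accuracy required and the range of the TLL output scale with the cube edge, so extracting a single clean worst-case bound — the nested logarithm together with the isolated factor $K$ — requires taking the largest range at shallow depth while still summing consistently across the $O(K\cdot 2^{nK})$ cubes. A secondary subtlety is verifying that the intersected inputs $Q_p^{d+1}\cap X_t$ keep the LP constraint count at $O(N+N_{X_t})$ rather than letting the subdivision constraints accumulate with depth; since each recursive region is described by at most its $2n$ cube faces plus the original $X_t$ constraints, this stays controlled, but it must be checked explicitly.
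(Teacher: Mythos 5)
Your proposal is correct and follows essentially the same route as the paper's proof: a Lipschitz-based depth bound giving the $K \cdot 2^{K\cdot n}$ call count, per-call cost dominated by $m$ binary searches via FastBATLLNN at complexity $n \cdot M \cdot N^{n+3} \cdot \text{LP}(N+N_{X_t},n)/n!$ each, and the nested logarithm from folding the depth-dependent output range into the binary-search iteration count. Your added correctness verification of the returned box and the explicit check that the LP constraint count does not accumulate with recursion depth are refinements the paper's proof leaves implicit, not a different method.
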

\begin{proof}
	\myalg~recursively subdivides a hypercube of edge length at most $2\cdot 
	\text{ext}(X_t)$ into $2^n$ hypercubes with each recursion. Let $d$ denote 
	the number of recursions, so that at depth $d$, \myalg has created at most 
	$2^{d \cdot n}$ hypercubes. Let $Q_p^d$, $p = 1, \dots, 2^{d \cdot n}$ 
	denote the hypercubes at depth $d$.

	In the worst case, \myalg~must recurse on every single hypercube at each 
	depth until \emph{all} of the resultant subdivided hypercubes have edge 
	length at most $2 \cdot \text{ext}(X_t) \cdot (2 \lVert B \rVert \cdot 
	\lVert \mu \rVert)/\epsilon$. This explains the factor $K \cdot 2^{K \cdot 
	n}$: the complexity at each depth is \emph{added} to the runtime, so the 
	cumulative runtime is dominated by the runtime for the largest recursion 
	depth.

	On any given subdivided hypercube, $Q_p^d$, the complexity of \myalg~is 
	dominated by using FastBATLLNN in a binary search on each of the $m$ 
	real-valued TLLs comprising $\nn\subarg{\Xi_{\scriptscriptstyle  
	N,M}^{(m)}}$. This is necessary to determine an $\epsilon/2$ bounding box 
	on the output of $\nn\subarg{\Xi_{\scriptscriptstyle N,M}^{(m)}}$ when its 
	input is constrained to the set $Q_p^d$. Since the Lipschitz constant of 
	$\nn\subarg{\Xi_{\scriptscriptstyle N,M}^{(m)}}$ is known, the invocations 
	of FastBATLLNN on each output is associated with a binary search over an 
	interval of width
	$2 \cdot \lVert \nn\subarg{\Xi_{\scriptscriptstyle N,M}^{(m)}} \rVert \cdot 2 \cdot \text{ext}(X_t)/2^d$
	until iterations of the search are lie in an interval of width 
	$\epsilon/2$. Thus, each output requires 
	\vspace{-9pt}
	\begin{equation}
		\big\lceil \log_2\big(\epsilon \cdot \lVert \nn\subarg{\Xi_{\scriptscriptstyle N,M}^{(m)}} \rVert \cdot \text{ext}(X_t) / 2^{3-d} \big) \big\rceil
	\end{equation}
	invocations of FastBATLLNN, so the total number of invocations of 
	FastBATLLNN is less than:
	\begin{equation*}
		O\big(
		\log_2\big[
			\epsilon \cdot 
			\lVert \nn\subarg{\Xi_{\scriptscriptstyle N,M}^{(m)}} \rVert \cdot \text{ext}(X_t)
				\cdot
				\log_2\big[
					\tfrac{\text{ext}(X_t)\cdot \lVert B \rVert 
					\cdot \lVert \mu \rVert}{\epsilon}
				\big] 
		\big]
		\big)
	\end{equation*}
	From \cite{FerlezFastBATLLNN2022}, each invocation of FastBATLLNN has  
	complexity:
	\begin{equation}
		n \cdot M \cdot N^{n+3} \cdot \text{LP}(N + N_{X_t}, n) / n!.
	\end{equation}
	This explains the formula \eqref{eq:myalg_run_time}.
\end{proof}



\vspace{-15pt}
\section{Experiments} 
\label{sec:experiments}
\vspace{-5pt}
To evaluate \myalg~as an LTI reachability tool, we used it to perform 
multi-step bounding box propagation on a number of TLL NN controllers; see 
\cref{def:bounding_box_propagation}. We compared the results to NNV's 
\cite{TranNNVNeuralNetwork2020} approximate reachability analysis setting. For 
this evaluation we selected 40 networks from the TLL Verification Benchmark in 
the 2022 VNN competition \cite{vnn2022}: the first 10 examples from each of the 
sizes $N=M=8, 16, 24$ and  $32$ were used, and these TLLs were converted to a 
fully-connected Tensorflow format that NNV could import. Each TLL had $n=2$ 
inputs and $m = 1$ output, so we took these as our state and control 
dimensions, respectively and generated one random $A$ and $B$ matrix for each 
TLL NN. We likewise generated one polytopic set of states to serve as $X_0$ for 
each TLL/system combination. Reachability analysis was performed on both tools 
for $T = 3$ discrete time steps. Both tools were given at most 4 days of 
compute time per TLL/system combination on a standard Microsoft Azure E2ds v5 
instance; this instance has one CPU core running at 2.8GHz and  16Gb of  RAM  
and 32Gb swap.

The execution time results of this experiment are summarized by the 
box-and-whisker plot in \cref{fig:exec_time}. \myalg~was able to complete all 
reachability problems  well within the timeout. However, NNV only completed the 
10 reachability problems for size $N = 8$ and one reachability problem for size 
$N = 16$; it timed out at 4 days for all other problems. On problems where both 
tools completed the entire reachability analysis, \myalg~ranged from 32x faster 
(the first instance of size $N = 8$) to 5,392x faster (the commonly completed 
instance of $N = 16$). For problems that  both algorithms finished, the final 
reachability boxes produced by \myalg were anywhere from 0.08 to 1.42 times the 
area of those produced by NNV. \cref{fig:example_boxes} shows one sequence of 
reach boxes output by \myalg and NNV.

\vspace{-6pt} \setlength\floatsep {2pt}
\begin{figure}
\vspace{8pt}
\centering%
\includegraphics[width=.89\linewidth,trim={1.6cm 0cm 0cm 0.1cm},clip]{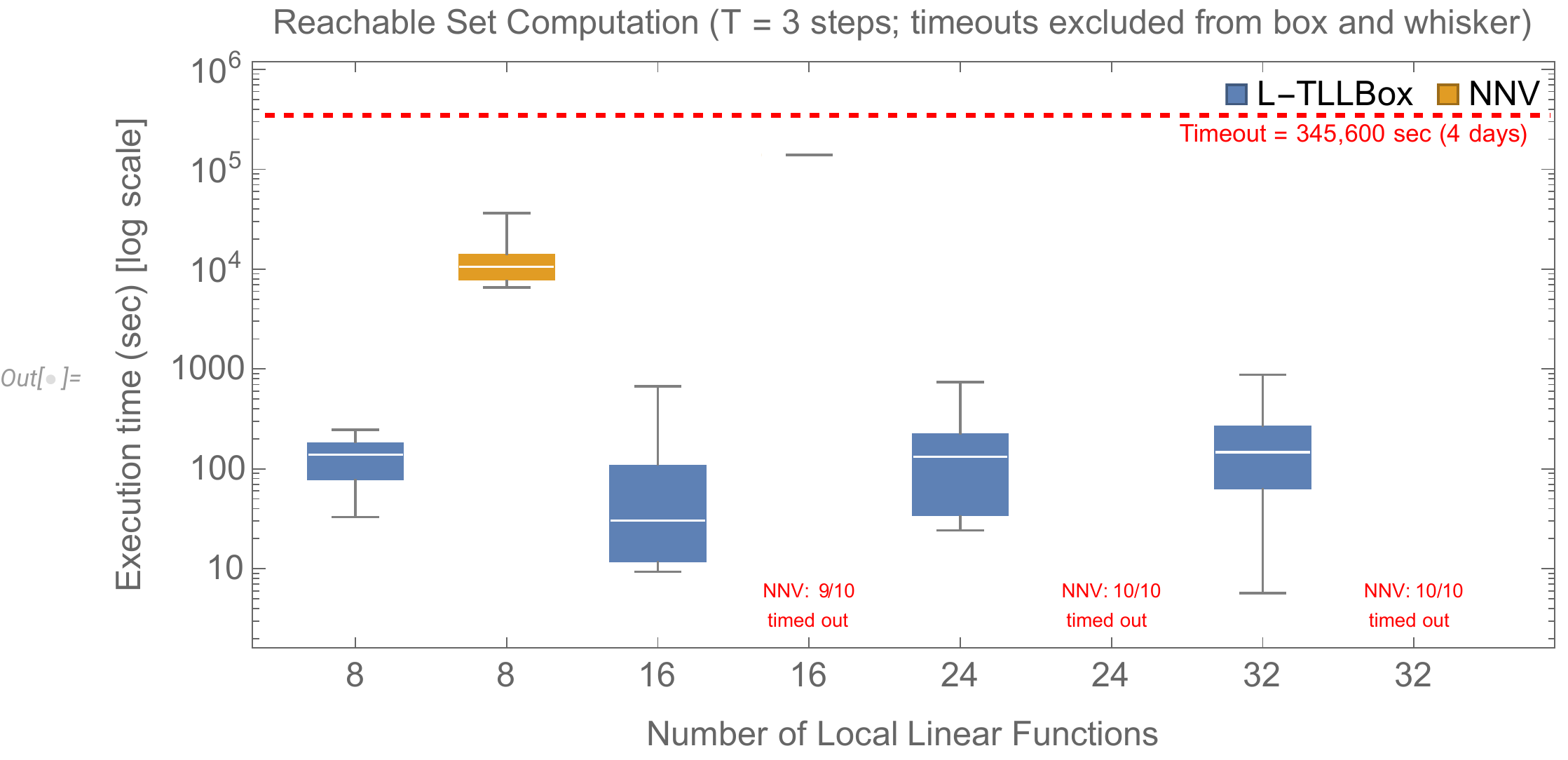}%
\vspace{-3pt}%
\caption{Execution time of \myalg~compared to NNV}
\label{fig:exec_time}
\end{figure}

\begin{figure}
\vspace{8pt}
\centering%
\includegraphics[width=.93\linewidth,trim={4.4cm 0.5cm 0cm 0.5cm},clip]{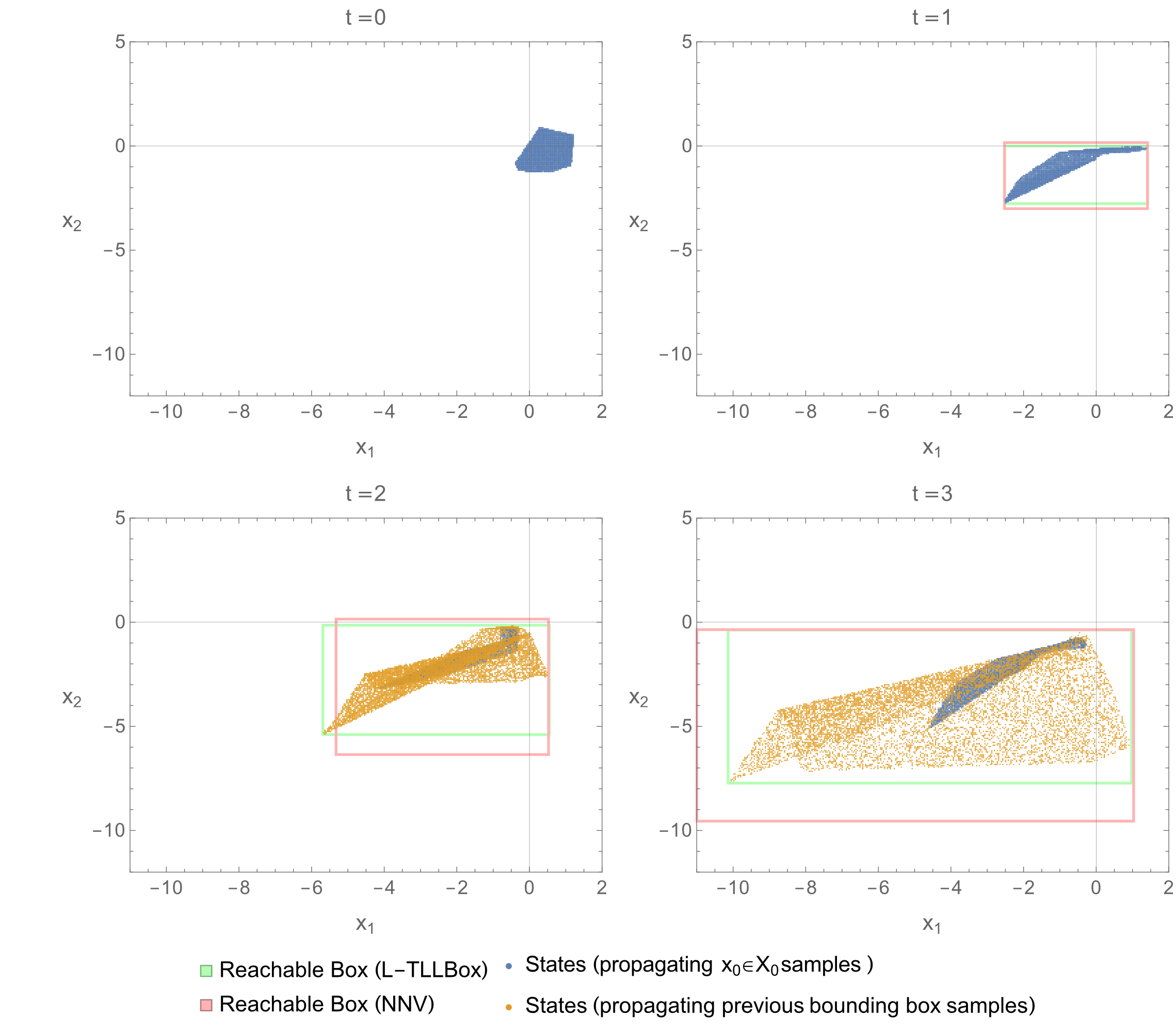}%
\vspace{-3pt}%
\caption{Example reachability box progressions computed by \myalg~(25 sec.) and NNV (139,000 sec.); this was the sole $N=16$ reachability sequence completed by NNV.}
\label{fig:example_boxes}
\vspace{-5pt}
\end{figure}


\g{%
\section{Conclusions} 
\label{sec:conclusions}
\marginnote{\g{(E15)}}In this paper, we presented several polynomial complexity 
results for reachability of an LTI system with a TLL NN controller, including 
\myalg; these results improve on the exponential complexity for the same 
reachability problem with a general NN controller, and thus provide a 
motivation for designing NN controllers using the TLL architecture. As a 
result, there are a numerous opportunities for future work such as: considering 
reachability for more general convex sets (e.g. ellipsoidal sets), and 
generalizing FastBATLLNN so that \myalg~can be extended to exact reachability. }%








\bibliographystyle{plain} %
\bibliography{mybib}

\end{document}